\def\eqref#1{equation~\ref{#1}}
\def\1{\bm{1}}
\DeclareMathAlphabet{\mathsfit}{\encodingdefault}{\sfdefault}{m}{sl}
\SetMathAlphabet{\mathsfit}{bold}{\encodingdefault}{\sfdefault}{bx}{n}
\newtheorem{thm}{Theorem}
\newtheorem{assum}{Assumption}
\title{Feedback Favors the Generalization of \\ Neural ODEs}
\author{Jindou Jia$^1$\thanks{Equal contribution.} , Zihan Yang$^1$$^{*}$, Meng Wang$^1$, Kexin Guo$^1$\thanks{Corresponding authors (\texttt{\{kxguo, xiangyu\_buaa\}@buaa.edu.cn}).}  \\
	\textbf{Jianfei Yang}$^2$, \textbf{Xiang Yu}$^1$$^{\dagger}$, \textbf{Lei Guo}$^1$ \\
	$^1$Beihang University $\ $$^2$Nanyang Technological University\\
	%\texttt{\{jdjia, snrt\_zzhan, SY2003127\}@buaa.edu.cn} \\
	%\texttt{\{kxguo, xiangyu, lguo\}@buaa.edu.cn} \\
	%\AND
	%Zihan Yang$^{*}$ \& Kexin Guo \\
	%School of Aeronautic Science and Engineering\\
	%Beihang University\\
	%Beijing, China \\
	%\texttt{\{snrt\_zzhan, kxguo\}@buaa.edu.cn} \\
}
\begin{document}

	\maketitle
	
	\begin{abstract}
		\textcolor{black}{The well-known generalization problem hinders the application of artificial neural networks in continuous-time prediction tasks with varying latent dynamics. In sharp contrast, biological systems can neatly adapt to evolving environments benefiting from real-time feedback mechanisms. Inspired by the feedback philosophy, we present feedback neural networks, showing that a feedback loop can flexibly correct the learned latent dynamics of neural ordinary differential equations (neural ODEs), leading to a prominent generalization improvement. The feedback neural network is a novel two-DOF neural network, which possesses robust performance in unseen scenarios with no loss of accuracy performance on previous tasks.} A linear feedback form is presented to correct the learned latent dynamics firstly, with a convergence guarantee. Then, domain randomization is utilized to learn a nonlinear neural feedback form. Finally, extensive tests including trajectory prediction of a real irregular object and model predictive control of a quadrotor with various uncertainties, are implemented, indicating significant improvements over state-of-the-art model-based and learning-based methods. \footnote[3]{Codes are available at \url{https://sites.google.com/view/feedbacknn}.}
	\end{abstract}
	
	\section{Introduction}
	
	\begin{wrapfigure}{r}{0.45\textwidth}
		\centering
		\includegraphics[width=1\linewidth]{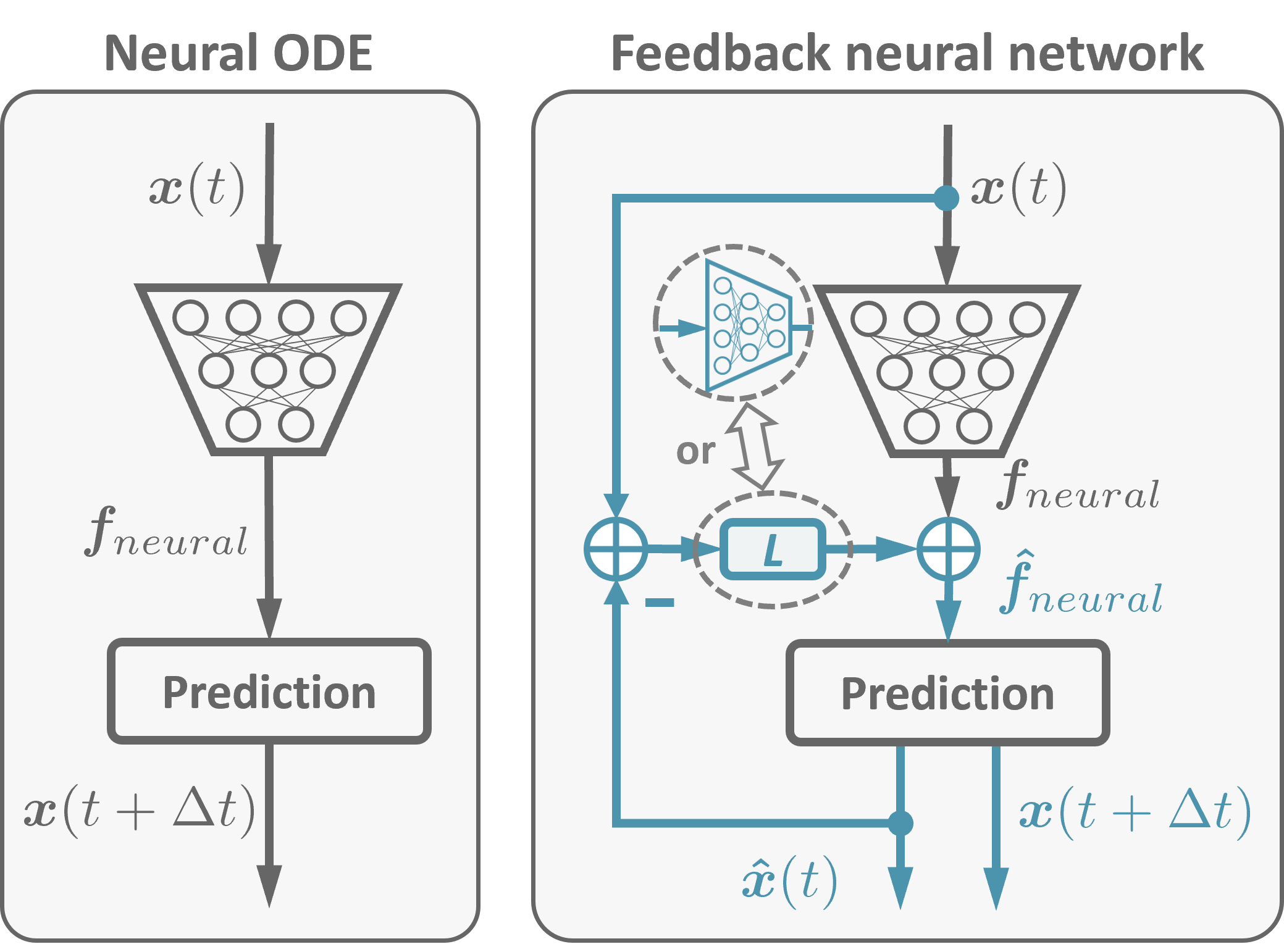}
		\caption{\textcolor{black}{Neural network architectures. \textit{Left:} Neural ODE developed in \citet{chen2018neural}. \textit{Right:} Proposed feedback neural network.}}
		\label{diagram}
	\end{wrapfigure}
	
	%The \textit{prediction} strategy can adopt various modern ODE solvers. In our framework, the learned latent ODE model is corrected according to real-time state feedback.
	
	Stemming from residual neural networks \citep{he2016deep}, neural ordinary differential equation (neural ODE) \citep{chen2018neural} emerges as a novel learning strategy aiming at learning the latent dynamic model of an unknown system. Recently, neural ODEs have been successfully applied to various scenarios, especially continuous-time missions \citep{liu2024application, verma2024climode, greydanus2019hamiltonian, cranmer2020lagrangian}. However, like traditional neural networks, the generalization problem limits the application of neural ODEs in real-world applications. 
	
	Traditional strategies like model simplification, fit coarsening, data augmentation, and transfer learning have considerably improved the generalization performance of neural networks on unseen tasks \citep{rohlfs2022generalization}. However, these strategies usually reduce the accuracy performance on previous tasks, and large-scale training data and network structures are often required to approximate previous accuracy. \textcolor{black}{The objective of this work is to develop a novel network architecture, acquiring the generalization improvement while preserving the accuracy performance.} 
	
	\textcolor{black}{Living beings can neatly adapt to unseen environments, even with limited neurons and computing power. One reason can be attributed to the existence of internal feedback \citep{aoki2019universal}. Internal feedback has been shown to exist in biological control, perception, and communication systems, handling external disturbances, internal uncertainties, and noises \citep{sarma2022internal, markov2021cerebellar}. In neural circuits, feedback inhibition is able to regulate the duration and magnitude of excitatory signals \citep{luo2021architectures}. In engineering systems, internal feedback indicates impressive effects across filtering and control tasks, such as \textit{Kalman} filter \citep{kalman1960new}, \textit{Luenberger} observer \citep{luenberger1966observers}, extended state observer \citep{guo2020multiple}, and proportional-integral-derivative control \citep{ang2005pid}. The effectiveness of feedback lies in its ability to harness real-time deviations between internal predictions/estimations and external measurements to infer dynamical uncertainties. The cognitive corrections are then performed timely. However, existing neural networks rarely incorporate such a real-time feedback mechanism.}
	
	\textcolor{black}{In this work, we attempt to enhance the generalization of neural ODEs by incorporating the feedback scheme. The key idea is to correct the learned latent dynamical model of a Neural ODE according to the deviation between measured and predicted states, as illustrated in Figure \ref{diagram}. We introduce two types of feedback: linear form and nonlinear neural form. Unlike previous training methods that compromise accuracy for generalization, the developed feedback neural network is a two-DOF framework that exhibits generalization performance on unseen tasks while maintaining accuracy on previous tasks. The effectiveness of the presented feedback neural network is demonstrated through several intuitional and practical examples, including trajectory prediction of a spiral curve, trajectory prediction of an irregular object and model predictive control (MPC) of a quadrotor.}
	
	% We first present a linear feedback form with a convergence guarantee and develop an accurate multi-step prediction strategy (Section \ref{linear_feedback_sec},). A nonlinear feedback form modeled by networks is proposed subsequently (Section \ref{Domain_rand}). 
	
	\section{Neural ODEs and learning residues}
	A significant application of artificial neural networks ‌centers around the prediction task., $\bm{x}(t) \mapsto \bm{x}(t+\Delta t)$. Note that $t$ indicates the input $\bm{x}$ evolves with time. \citet{chen2018neural} utilize neural networks to directly learn latent ODEs of target systems, named Neural ODEs. Neural ODEs greatly improve the modeling ability of neural networks, especially for continuous-time dynamic systems \citep{massaroli2020dissecting}, while maintaining a constant memory cost. The ODE describes the instantaneous change of a state \textcolor{black}{$\bm{x}(t)\in \mathbb{R}^n$}
	\begin{align} \label{ODE}
		\frac{{d\bm{x}}(t)}{{dt}} = \bm{f}\left( {\bm{x}(t),\bm{I}(t), t} \right)
	\end{align} 
	where \textcolor{black}{$\bm{f}(\cdot): \mathbb{R}^n \times \mathbb{R}^m \times\ \mathbb{R} \rightarrow \mathbb{R}^n$} represents a latent nonlinear mapping, and \textcolor{black}{$\bm{I}(t)\in \mathbb{R}^m$} denotes external input. Note that compared with \citet{chen2018neural}, we further consider $\bm{I}(t)$ that can extend the ODE to controlled dynamics. The \textit{adjoint sensitive method} is employed in \citet{chen2018neural} to train neural ODEs without considering $\bm{I}(t)$. In Appendix \ref{train_input}, we provide an alternative training strategy in the presence of $\bm{I}(t)$, from the view of optimal control.s
	
	Given the ODE (\ref{ODE}) and an initial state $\bm{x}(t)$, future state can be predicted as an initial value problem
	\begin{align} \label{prediction_ODE}
		\bm{x}(t + \Delta t) = \bm{x}(t) + \int_t^{t + \Delta t} {\bm{f}\left( {\bm{x}\left( \tau \right), \bm{I}\left( \tau \right),\tau} \right)d\tau}. 
	\end{align}
	
	%ODEs have a wide range of applications to describe the dynamic evolution in fields of mechanics, circuitry, biology, economics and so on. Traditional establishments of ODEs usually depend on \textit{a priori} expert knowledge, which maybe intractable for some intricate cases. \citet{chen2018neural} proposed to learn the ODE \eqref{ODE} through neural networks with the aid of \textit{adjoint sensitive method}, which bridges the gap between continuous-time dynamics systems and deep learning \citep{massaroli2020dissecting}. However, like traditional learning methods, generalization is still a major bottleneck for neural ODEs \citep{marion2024generalization}. 
	
	The workflow of neural ODEs is depicted in Figure \ref{diagram}. However, like traditional learning methods, generalization is a major bottleneck for neural ODEs \citep{marion2024generalization}. Learning residuals will appear if the network has not been trained properly (e.g., underfitting and overfitting) or the applied scenario has a slightly different latent dynamic model. Take a spiral function as an example (Appendix \ref{spiral_dyna}). When a network trained from a given training set (Figure \ref{Spiral_curve} (a)) is transferred to a new case (Figure \ref{Spiral_curve} (b)), the learning performance will dramatically degrade (Figure \ref{Spiral_curve} (d)). Without loss of generality, the learning residual error is formalized as 
	\begin{align} \label{pure_NN}
		\bm{f}\left( {\bm{x}(t),\bm{I}(t), t} \right) = \bm{f}_{neural}\left( {\bm{x}(t),\bm{I}(t), t,\bm{\theta}} \right) + \Delta \bm{f}(t) 
	\end{align}
	where \textcolor{black}{$\bm{f}_{neural}\left( \cdot \right): \mathbb{R}^n \times \mathbb{R}^m \times\ \mathbb{R} \rightarrow \mathbb{R}^n$} represents the learned ODE model parameterized by $\bm{\theta}$, and  \textcolor{black}{$\Delta \bm{f}(t)\in \mathbb{R}^n$} denotes the unknown learning residual error. In the presence of $\Delta \bm{f}(t)$, the prediction error of (\ref{prediction_ODE}) will accumulate over time. The objective of this work is to improve neural ODEs with as few modifications as possible to suppress the effects of $\Delta \bm{f}(t)$. 
	
	\section{Neural ODEs with a linear feedback} \label{linear_feedback_sec}
	
	\subsection{Correcting latent dynamics through feedback}
	
	Even though learned experiences are encoded by neurons in the brain, living organisms can still adeptly handle unexpected internal and external disturbances with the assistance of feedback mechanisms \citep{aoki2019universal, sarma2022internal}. The feedback scheme has also proven effective in traditional control systems, facilitating high-performance estimation and control objectives. Examples include \textit{Kalman} filter \citep{kalman1960new}, \textit{Luenberger} observer \citep{luenberger1966observers}, extended state observer \citep{guo2020multiple}, and proportional-integral-derivative control \citep{ang2005pid}.
	
	\begin{wrapfigure}{l}{0.4\textwidth}
		\centering
		\includegraphics[width=1\linewidth]{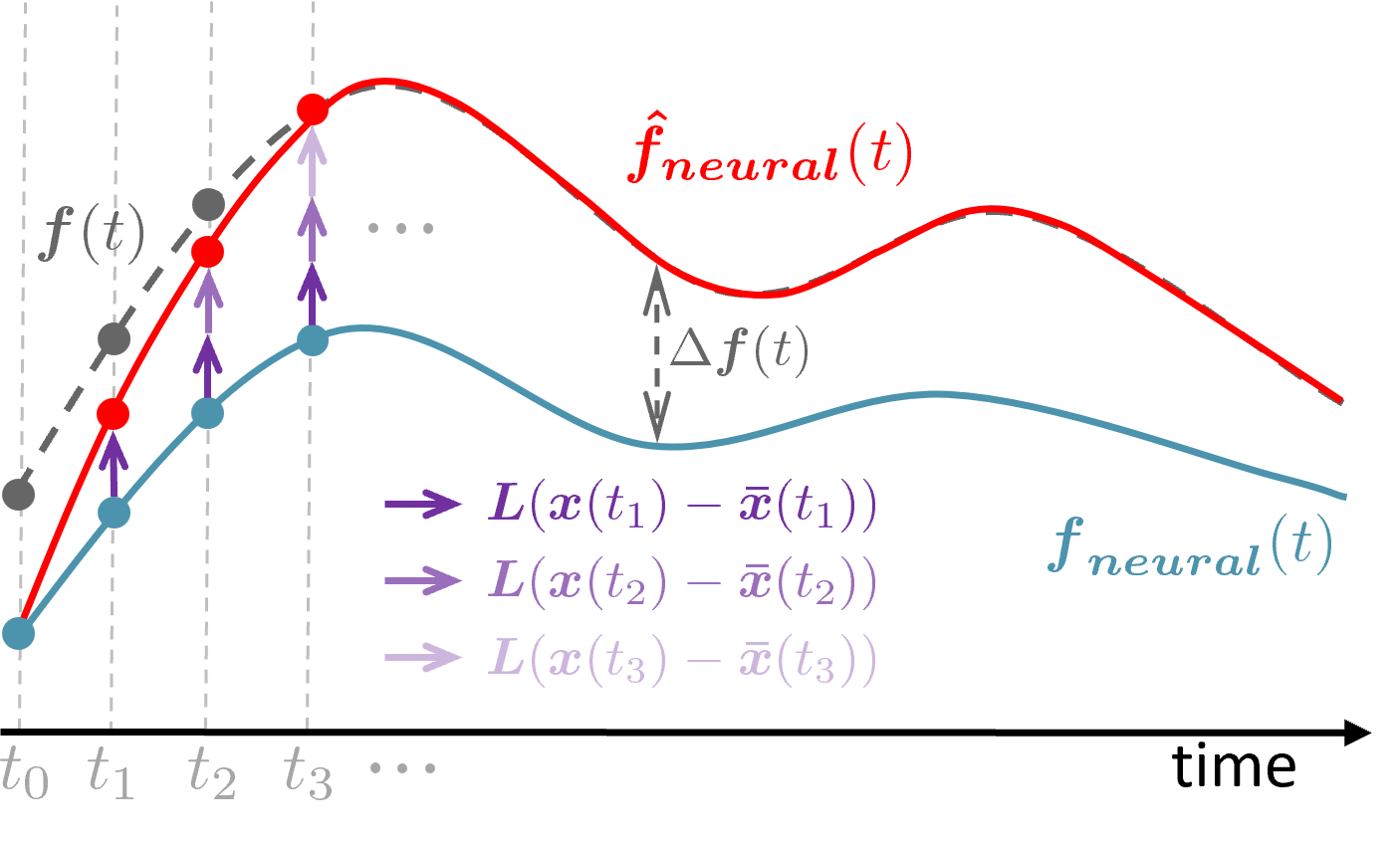}
		\caption{The learned latent dynamics are modified through accumulative evaluation errors to approach the truth one.}
		\label{sketch}
	\end{wrapfigure}
	
	We attempt to introduce the feedback scheme into neural ODEs, named feedback neural networks, as shown in Figure \ref{diagram}. Neural ODEs have exploited latent dynamical models $\bm{f}_{neural}(t)$ of target systems in training set. The key idea of feedback neural networks is to further correct $\bm{f}_{neural}(t)$ according to state feedback. Denote $t_i$ as the historical evaluation moment satisfying $t_i \le t$. At current moment $t$, we collect \textcolor{black}{$k+1$ state measurements} $\left\{\bm{x}(t_0), \bm{x}(t_1), \cdots, \bm{x}(t_k)\right\}$, in which $t_k = t$. As portrayed in Figure \ref{sketch}, $\bm{f}_{neural}(t)$ is modified by historical evaluation errors to approach its truth dynamics $\bm{f}(t)$, i.e.,
	\begin{align} \label{sys1_1}
		\bm{\hat{f}}_{neural}(t) = \bm{f}_{neural}(t) + \sum\limits_{i = 0}^k {\bm{L}\left( {\bm{x}\left( {{t_i}} \right) - \bm {\bar{x}}\left( {{t_i}} \right)} \right)} 
	\end{align} 
	where \textcolor{black}{$\bm{L} \in \mathbb{R}^{n\times n}$} represents the positive definite matrix and \textcolor{black}{$\bm {\bar{x}}\left( {{t_i}} \right) \in \mathbb{R}^n$} represents the predicted state from the last evaluation moment, e.g., an \textit{Euler} integration
	\begin{align} \label{sys1_2}
		\bm {\bar{x}}\left( {{t_i}} \right) = \bm{x}\left( {{t_{i-1}}} \right) + T_s\bm{\hat{f}}_{neural}(t_{i-1})
	\end{align}
	with the prediction step \textcolor{black}{$T_s \in \mathbb{R}$}.
	
	To avoid storing more and more historical measurements over time, define an auxiliary variable 
	\begin{align}  \label{auxiliary_var}
		\bm {\hat{x}}\left( {{t}} \right) = \bm {\bar{x}}\left( {{t}} \right) - \sum\limits_{i = 0}^{k-1} {\left( {\bm{x}\left( {{t_i}} \right) - \bm {\bar{x}}\left( {{t_i}} \right)} \right)}
	\end{align}
	where \textcolor{black}{$\bm {\hat{x}}\left( {{t}} \right) \in \mathbb{R}^n$} can be regarded as an estimation of $\bm {{x}}\left( {{t}} \right)$. Combining (\ref{sys1_1}) and (\ref{auxiliary_var}), can lead to
	\begin{align} \label{feedback_NN}
		\bm{\hat{f}}_{neural}(t) = \bm{f}_{neural}(t) + \bm{L}(\bm{x}(t)-\bm{\hat{x}}(t)).
	\end{align}
	From (\ref{sys1_2}) and (\ref{auxiliary_var}), it can be further rendered that
	\begin{align} \label{sys2_2}
		\bm {\hat{x}}\left( {{t_k}} \right) = \bm {\hat{x}}\left( {{t_{k-1}}} \right) + T_s\bm{\hat{f}}_{neural}(t_{k-1}).
	\end{align}
	By continuating the above \textit{Euler} integration, it can be seen that $\bm{\hat{x}}(t)$ is the continuous state of the modified dynamics, i.e., $\bm{\dot{\hat{x}}}(t) = \bm{\hat{f}}_{neural}(t)$. Finally, $	\bm{\hat{f}}_{neural}(t)$ can be persistently obtained through (\ref{feedback_NN}) and (\ref{sys2_2}) recursively, instead of (\ref{sys1_1}) and (\ref{sys1_2}) accumulatively.
	
	\subsection{Convergence analysis}
	
	In this part, the convergence property of the feedback neural network is analyzed. The state observation error of the feedback neural network is defined as ${\bm{\tilde x}}(t) = \bm{{x}}(t) - \bm{\hat{x}}(t)$, and its derivative ${\bm{\dot{\tilde x}}}(t)$, i.e., the approximated error of latent dynamics is defied as ${\bm{\tilde f}}(t) = \bm{{f}}(t) - \bm{\hat{f}}_{neural}(t)$. Substitute (\ref{ODE}) and (\ref{pure_NN}) into (\ref{feedback_NN}), one can obtain the error dynamics
	\begin{align} \label{error_dyna}
		{\bm{\dot{\tilde {x}}}}(t) = - \bm{L}{\bm{{\tilde {x}}}}(t) + \Delta \bm{f}(t).
	\end{align}
	
	Before proceeding, a reasonable bounded assumption on the learning residual error $\Delta \bm{f}(t)$ is made.
	\begin{assum} \label{assum}
		There exists an unknown upper bound such that
		\begin{align} \label{bounded_assum}
			\| \Delta \bm{f}(t) \| \le \gamma
		\end{align}
		where $\|\cdot\|$ denotes the \textit{Euclidean} norm and \textcolor{black}{$\gamma\in \mathbb{R}$} is an unknown positive value.
	\end{assum}
	\textcolor{black}{Note that the above assumption can cover common step disturbances (Figure \ref{step_disturbance}).}
	%Then the following convergence theorem can be established.
	
	\begin{thm} \label{thm1}
		\textcolor{black}{Consider the nonlinear system (\ref{ODE}). Under the linear state feedback (\ref{feedback_NN}) and the bounded Assumption \ref{assum}, the state observation error ${\bm{\tilde x}}(t)$ and its derivative ${\bm{\dot{\tilde x}}}(t)$ (i.e., ${\bm{\tilde f}}(t)$) can exponentially converge to bounded sets $\mathcal{B}_1 = \left\{{\bm{\tilde x}}(t)\in\mathbb{R}^n:\left\|{\bm{\tilde x}}(t)\right\|\le{\gamma}/{{\lambda_m(\bm{L})}}\right\}$ and $\mathcal{B}_2 = \left\{{\bm{\dot{\tilde x}}}(t)\in\mathbb{R}^n:\left\|{\bm{\dot{\tilde x}}}(t)\right\|\le{\gamma}\lambda_M(\bm{L})/{{\lambda_m(\bm{L})}} + \gamma\right\}$, respectively, which can be regulated by $\bm{L}$.}
	\end{thm}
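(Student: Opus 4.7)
The plan is to analyze the linear, time-invariant error system \eqref{error_dyna} via a quadratic Lyapunov function and treat the learning residual $\Delta \bm{f}(t)$ as a bounded disturbance by means of Assumption~\ref{assum}. The attractiveness of the bounded set $\mathcal{B}_1$ will then follow from a standard input-to-state stability argument, and the bound defining $\mathcal{B}_2$ will be obtained by substituting the bound on $\bm{\tilde x}(t)$ back into the error dynamics.

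First I would pick $V(t)=\tfrac{1}{2}\bm{\tilde x}(t)^\top \bm{\tilde x}(t)$ and differentiate along \eqref{error_dyna} to obtain
\begin{equation*}
\dot V(t) = -\bm{\tilde x}(t)^\top \bm{L}\,\bm{\tilde x}(t) + \bm{\tilde x}(t)^\top \Delta\bm{f}(t).
\end{equation*}
Because $\bm{L}$ is positive definite, the quadratic term is bounded below by $\lambda_m(\bm{L})\|\bm{\tilde x}(t)\|^2$, and Cauchy--Schwarz together with \eqref{bounded_assum} controls the cross term by $\gamma\|\bm{\tilde x}(t)\|$. Factoring yields $\dot V(t)\le -\|\bm{\tilde x}(t)\|\bigl(\lambda_m(\bm{L})\|\bm{\tilde x}(t)\|-\gamma\bigr)$, which is strictly negative whenever $\|\bm{\tilde x}(t)\|>\gamma/\lambda_m(\bm{L})$, giving the set $\mathcal{B}_1$ as an attractor.

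To upgrade this to exponential convergence with the stated rate, I would apply Young's inequality in the form $\gamma\|\bm{\tilde x}(t)\|\le \tfrac{\lambda_m(\bm{L})}{2}\|\bm{\tilde x}(t)\|^2 + \tfrac{\gamma^2}{2\lambda_m(\bm{L})}$, so that $\dot V(t)\le -\lambda_m(\bm{L})V(t) + \tfrac{\gamma^2}{2\lambda_m(\bm{L})}$. The comparison lemma then produces $V(t)\le V(0)e^{-\lambda_m(\bm{L})t} + \tfrac{\gamma^2}{2\lambda_m(\bm{L})^2}$, from which $\|\bm{\tilde x}(t)\|$ exponentially approaches $\mathcal{B}_1$ at a rate governed by $\lambda_m(\bm{L})$, confirming that the decay rate is tunable through $\bm{L}$.

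For the derivative bound, I would take norms directly in \eqref{error_dyna}: $\|\bm{\dot{\tilde x}}(t)\|\le \|\bm{L}\|\cdot\|\bm{\tilde x}(t)\| + \|\Delta\bm{f}(t)\|\le \lambda_M(\bm{L})\|\bm{\tilde x}(t)\| + \gamma$. Because $\|\bm{\tilde x}(t)\|$ enters $\mathcal{B}_1$ exponentially, substituting the ultimate bound $\gamma/\lambda_m(\bm{L})$ gives the set $\mathcal{B}_2$ with the stated constant $\gamma\lambda_M(\bm{L})/\lambda_m(\bm{L})+\gamma$, and the exponential character is inherited from that of $\bm{\tilde x}(t)$. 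The identification $\bm{\tilde f}(t)=\bm{\dot{\tilde x}}(t)$ from the derivation above \eqref{error_dyna} closes the argument.

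I expect the main obstacle to be the bookkeeping needed to match the \emph{exact} constants in $\mathcal{B}_1$ and $\mathcal{B}_2$ as written, rather than the qualitative ISS argument: in particular, picking a Young's-inequality split that yields the ratio $\gamma/\lambda_m(\bm{L})$ in the limit rather than some looser multiple, and ensuring that $\|\bm{L}\|$ can legitimately be replaced by $\lambda_M(\bm{L})$ (which requires $\bm{L}$ to be symmetric, a point worth making explicit given that only positive definiteness is stated).
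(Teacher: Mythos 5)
Your proposal is correct and follows essentially the same route as the paper's proof: the quadratic Lyapunov function $V=\tfrac12\bm{\tilde x}^\top\bm{\tilde x}$, the identical Young's-inequality split $\bm{\tilde x}^\top\Delta\bm{f}\le\tfrac{\lambda_m(\bm{L})}{2}\|\bm{\tilde x}\|^2+\tfrac{\gamma^2}{2\lambda_m(\bm{L})}$, the comparison lemma giving $\dot V\le-\lambda_m(\bm{L})V+\tfrac{\gamma^2}{2\lambda_m(\bm{L})}$, and the bound on $\bm{\dot{\tilde x}}$ obtained by substituting the ultimate bound back into the error dynamics. Your closing remark that $\|\bm{L}\|=\lambda_M(\bm{L})$ presumes symmetry is a fair observation the paper leaves implicit, but it does not change the argument.
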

	\begin{proof}
		See Appendix \ref{proof_thm1}.
	\end{proof}
	%It can be seen that the upper bound can be regulated to arbitrarily small by increasing $\lambda_m(\bm{L})$.
	
	\subsection{Multi-step prediction}
	
	\begin{wrapfigure}{r}{0.2\textwidth}
		\centering
		\includegraphics[width=1\linewidth]{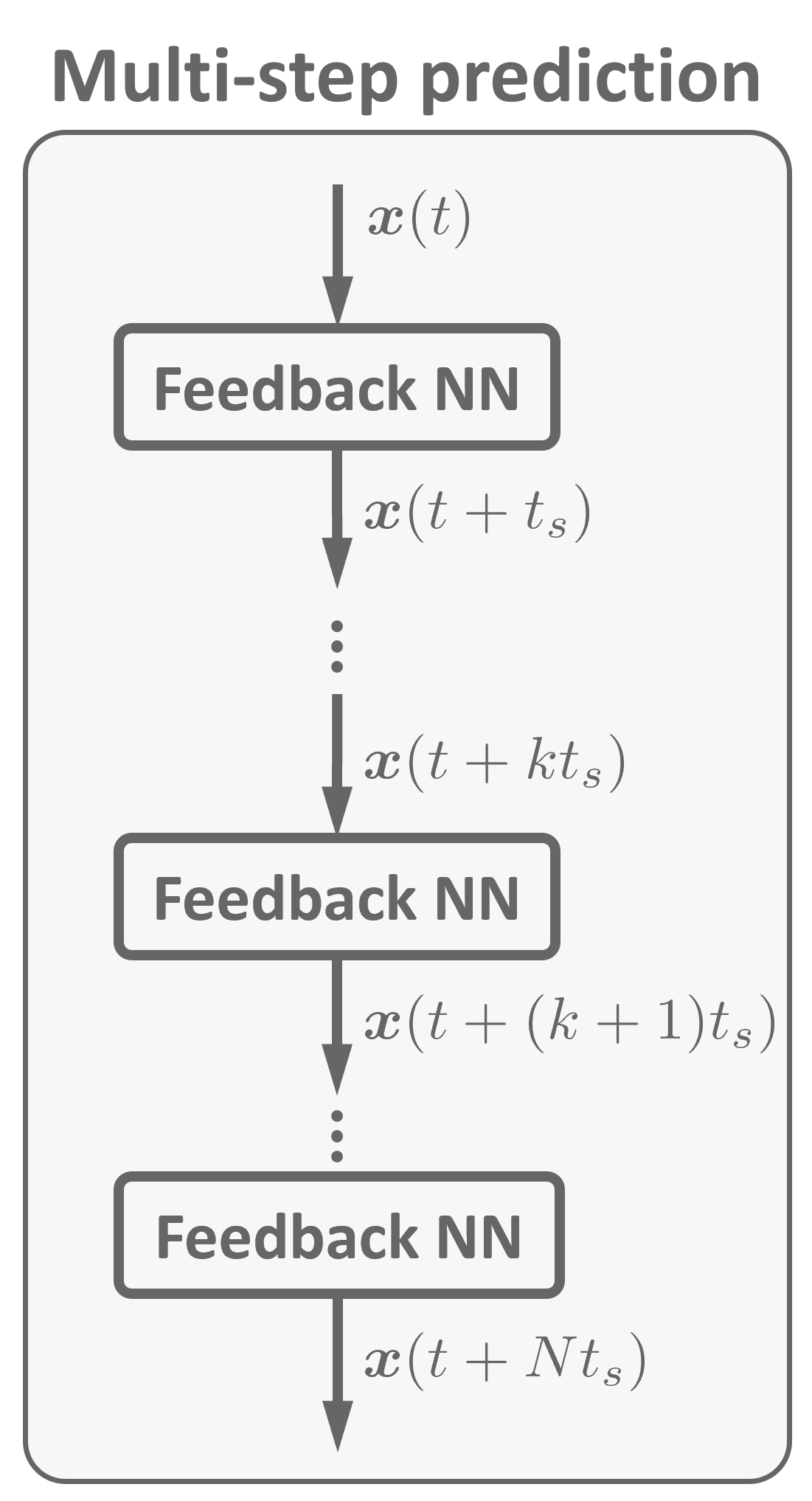}
		\caption{The multi-step prediction.}
		\label{multi_prediction}
	\end{wrapfigure}
	
	With the modified dynamics ${\bm{\hat f}}(t)$ and current $\bm{x}(t)$, the next step is to predict $\bm{x}(t+\Delta t)$ as in (\ref{prediction_ODE}). By defining \textcolor{black}{$\bm{z}(t) = \left[\bm{x}^T(t), {\bm{\hat x}}^T(t)\right]^T \in \mathbb{R}^{2n}$}, from (\ref{sys2_2}), we have \textcolor{black}{$\bm{\dot{z}}(t) = \left[{\bm{f}}^T(t), {\bm{\hat f}}^T(t)\right]^T$}. One intuitional means to obtain $\bm{z}(t+\Delta t)$ is to solve the ODE problem with modern solvers. However, as shown in Theorem \ref{thm1}, the convergence of $\bm{\tilde f}(t)$ can only be guaranteed as current $t$. In other words, the one-step prediction result by solving the above ODE is accurate, while the error will accumulate in the long-term prediction. In this part, an alternative multi-step prediction strategy is developed to circumvent this problem.
	
	The proposed multi-step prediction strategy is portrayed in Figure \ref{multi_prediction}, which can be regarded as a cascaded form of one-step prediction. The output of each feedback neural network is regarded as the input of the next layer. Take the first two layers as an example. The first-step prediction $\bm{x}(t+T_s)$ is obtained by $\bm{{x}}(t+T_s) = \bm{{x}}(t) + \bm{\hat{f}}(\bm{{x}}(t), \bm{\hat{x}}(t), \theta) T_s$. The second layer with the input of $\bm{{x}}(t+T_s)$ will output $\bm{{x}}(t+2T_s)$. In such a framework, the convergence of later layers will not affect the convergence of previous layers. Thus, the prediction error will converge from top to bottom in order. 
	
	Note that the cascaded prediction strategy can amplify the data noise in case of large $\bm{L}$. A gain decay strategy is designed to alleviate this issue. Denote the feedback gain of $i$-th later as $\bm{L}_i$, which decays as $i$ increases
	\begin{align}
		\bm{L}_i = \bm{L} \odot e^{-\beta i}
	\end{align}
	where $\beta$ represents the decay rate. The efficiency of the decay strategy is presented in Figure \ref{Spiral_curve}(g). The involvement of the decay factor in the multi-step prediction process significantly enhances the robustness to data noise.
	
	%\begin{thm} \label{thm2}
	%	Consider the nonlinear system \eqref{ODE}, Under the feedback neural network \eqref{feedback_NN} and the bounded Assumption \ref{assum}, the multi-step prediction error can converge to a bounded set $N{\gamma}/\sqrt{\left[{4\lambda_m(\bm{L})-2}\right]}$ exponentially, where $N$ represents predicted steps.
	%\end{thm}
	%\begin{proof}
	%	See Appendix \ref{proof_thm2}
	%\end{proof}
	\subsection{\textcolor{black}{Ablation study on observer gain}} \label{gain_sec}
	
	The adjustment of linear feedback gain $\bm{L}$ can be separated from the training of neural ODEs, which can increase the flexibility of the structure. 
	
	The gain adjustment strategy is intuitional. \textcolor{black}{Theorem \ref{thm1} indicates that the prediction error will converge to a bounded set as the minimum eigenvalue of feedback gain is positive.} And the converged set can shrink with the increase of the minimum eigenvalue. In reality, the amplitude of $\lambda_m(\bm{L})$ is limited since the feedback $\bm{x}$ is usually noised. The manual adjustment of $\lambda_m(\bm{L})$ needs the trade-off between prediction accuracy and noise amplification. \textcolor{black}{Thus, an ablation study on $(\bm{L})$ to show practical implications of Theorem \ref{thm1} under Assumption \ref{error_dyna} is implemented.}
	
	Figure \ref{ablation_gain} shows the multi-step prediction errors  ($N$ = 50)  with different levels of feedback gains and uncertainties. \textcolor{black}{Two phenomena can be observed from the heatmap. The one is that the prediction error increases with the level of uncertainty. The other is that the prediction error decreases with the gain at the beginning, but due to noise amplification, the prediction error worsens if the gain is set too large.}
	
	\section{Neural ODEs with a neural feedback} \label{Domain_rand}
	
	\begin{wrapfigure}{r}{0.66\textwidth}
		\centering
		\includegraphics[width=1\linewidth]{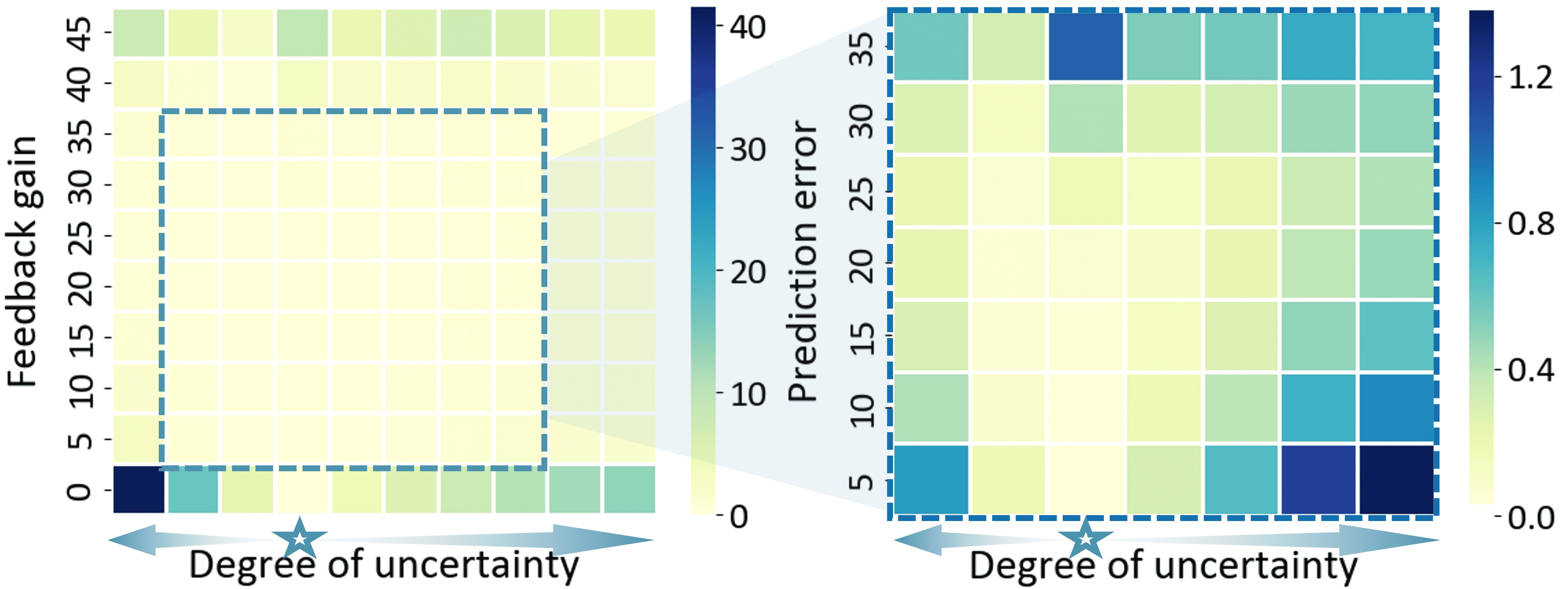}
		\caption{\textcolor{black}{Prediction errors of the spiral curve with different levels of feedback gains and uncertainties to show practical implications of Theorem \ref{thm1} udner Assumption \ref{error_dyna}. The \textit{right} image is a partial enlargement of the \textit{left} one. The blue star denotes the case without uncertainty, and the uncertainty increases along both the left and right directions. When the gain is set as $0$, the feedback neural network will equal the neural ODE. The related simulation setup is detailed in {Appendix \ref{appen_gain}}.}}
		\label{ablation_gain}
	\end{wrapfigure}

	Section \ref{linear_feedback_sec} has shown a linear feedback form can promptly improve the adaptability of neural ODEs in unseen scenarios. However, two improvements could be further made. At first, it will be more practical if the gain tuning procedure could be avoided. Moreover, the linear feedback form can be extended to a nonlinear one \textcolor{black}{$\bm{h}(\bm{x}(t)-\bm{\hat{x}}(t)):\mathbb{R}^n \rightarrow \mathbb{R}^n $} to adopt more intricate scenes, as experienced in the control field \citep{ESO}. 
	
	An effectual solution is to model the feedback part using another neural network, i.e., $\bm{h}_{neural}(\bm{x}(t)-\bm{\hat{x}}(t), \bm{\xi })$ parameterized by $\bm{\xi}$. Here we design a separate learning strategy to learn $\bm{\xi}$. At first, the neural ODE is trained on the nominal task without considering the feedback part. Then the feedback part is trained through domain randomization by freezing the neural ODE. In this way, the obtained feedback neural network is skillfully considered as a two-DOF network. On the one hand, the original neural ODE preserves the accuracy on the previous nominal task. On the other hand, with the aid of feedback, the generalization performance is available in the presence of unknown uncertainties.
	
	\subsection{Domain randomization}
	
	The key idea of domain randomization \citep{tobin2017domain, peng2018sim} is to randomize the system parameters, noises, and perturbations as collecting training data so that the real applied case can be covered as much as possible. Taking the spiral example as an example (Figure \ref{Spiral_curve} (a)), training with domain randomization requires datasets collected under various periods, decay rates, and bias parameters, so that the learned networks are robust to the real case with a certain of uncertainty.
	
	Two shortcomings exist when employing domain randomization. On the one hand, the existing trained network needs to be retrained and the computation burden of training is dramatically increased. On the other hand, the training objective is forced to focus on the average performance among different parameters, such that the prediction ability on the previous nominal task will degraded, as shown in Figure \ref{domain_test} (a). To maintain the previous accuracy performance, larger-scale network designs are often required. In other words, the domain randomization trades precision for robustness. In the proposed learning strategy, the generalization ability is endowed to the feedback loop independently, so that the above shortcomings can be circumvented.
	
	\subsection{Learning a neural feedback}
	
	In this work, we specialize the virtue from domain randomization to the feedback part $\bm{h}_{neural}(t)$ rather than the previous neural network $\bm{f}_{neural}(t)$. The training framework is formalized as follows
	\begin{align} \label{learn_feedback_part}
		{\bm{\xi} ^ * } & = \mathop {\arg \min }\limits_{\bm{\xi}}  \sum\limits_{i = 1}^{{n_{case}}} {\sum\limits_{j \in \mathcal{D}_i^{tra}} {\left\| {\bm{x}_{i,j}^ *  - \bm{x}_{i,j} } \right\|}} \notag \\
		s.t.\quad {\bm{x}_{i,j}} & = {\bm{x}_{i,j - 1}} + {T_s}\left( {{\bm{f}_{neural}}({\bm{x}_{i,j - 1}}) + \bm{h}_{neural}\left( {{\bm{x}_{i,j - 1}} - {\bm{\hat x}_{i,j - 1}},\bm{\xi} } \right)} \right)
	\end{align}
	where $n_{case}$ denotes the number of randomized cases, $\mathcal{D}_i^{tra}= \{{\bm{x}_{i,j - 1}}, {\bm{\hat x}_{i,j - 1}}, \bm{x}_{i,j}^ *| j = 1, \dots, m\}$ denotes the training set of the $i$-th case with $m$ samples, ${\bm{x}}_{i,j}^*$ denotes the labeled \textcolor{black}{state}, and ${\bm{x}}_{i,j}$ denotes one-step prediction of state, which is approximated by \textit{Euler} integration method here. 
	
	The learning procedure of the feedback part $\bm{h}_{neural}(t)$ is summarized as Algorithm \ref{algorithm}. After training the neural ODE $\bm{f}_{neural}(t)$ on the nominal task, the parameters of simulation model are randomized to produce $n_{case}$ cases. Subsequently, the feedback neural network is implemented in these cases and the training set $\mathcal{D}_i^{tra}$ of each case is constructed. The training loss is then calculated through (\ref{learn_feedback_part}), which favors the update of parameter $\bm{\xi}$ by backpropagation. The above steps are repeated until the expected training loss is achieved or the maximum number of iterations was reached.
	
	% Note that the datasets of each case depict a convergence procedure. Thus, the initial segment of the datasets needs to be modified to reveal the convergence, to avoid a poor learning performance in the initial stage (\textcolor{black}{Appendix ??}).
	% \STATE Modify the labels in initial segments to reveal the convergence procedure;
	
	\begin{algorithm}[htb]
		\caption{Learning neural feedback through domain randomization}
		\label{algorithm}
		\begin{algorithmic}[1] %这个1 表示每一行都显示数字
			\REQUIRE Randomize parameters to produce $n_{case}$ cases; trained neural ODE  $\bm{f}_{neural}$ on nominal task.
			\ENSURE Neural feedback $\bm{h}_{neural}$.\\
			\COMMENT \ Network parameter $\bm{\xi}$; \textit{Adam} optimizer.
			\REPEAT 
			\STATE Run feedback neural network among $n_{case}$ cases to produce ${\bm{\hat {x}}_{i,j}}$;
			\STATE Construct datasets $\mathcal{D}_i^{tra}$;
			\STATE Evaluate loss through (\ref{learn_feedback_part}) on randomly selected mini-batch data;
			\STATE Update $\bm{\xi}$ by backpropagation;
			\UNTIL{\textbf{convergence}}
		\end{algorithmic}
	\end{algorithm}
	
	\begin{figure}
		\centering
		\includegraphics[width=1\linewidth]{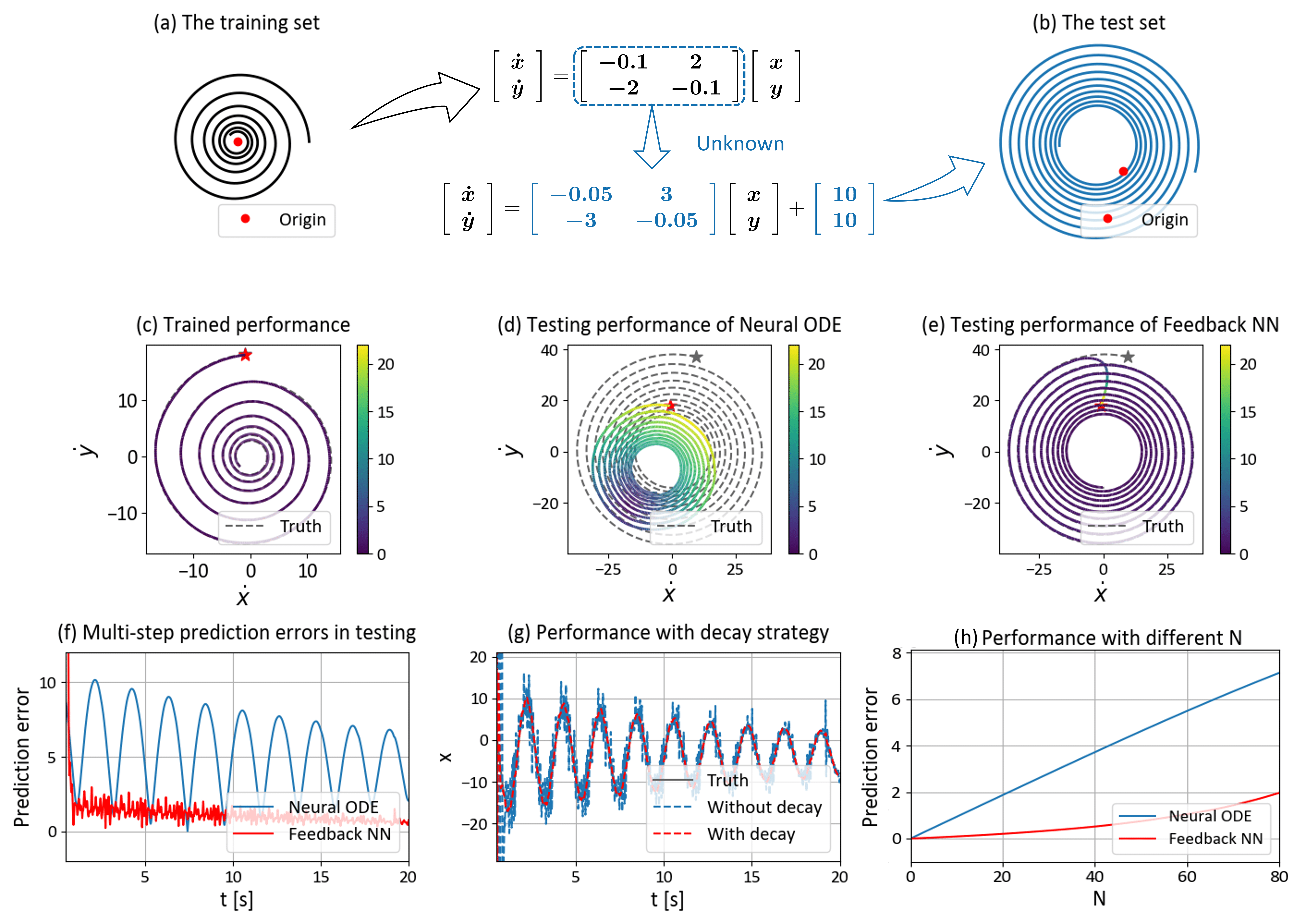}
		\caption{\textcolor{black}{A toy example is presented to intuitively illustrate the developed linear feedback}. The mission is to predict the future trajectory of a spiral curve with a given initial state $\{\bm{x}(t), \bm{y}(t)\}$. The neural ODE is trained on a given training set (a), yielding an approving learning result (c). Note that the pentagrams denote start points. The trained network is then transferred to a test set (b), which model is significantly different from the training one. \textcolor{black}{With the linear feedback mechanism}, the feedback neural network can achieve a better approximated accuracy of the change rate (e), in comparison with the neural ODE (d). As a result, a smaller multi-step prediction error (f) can be attained by benefiting from the feedback neural network. (g) shows that the noise amplification issue in multi-step prediction can be alleviated by the gain-decay strategy. (h) further presents the prediction results with different prediction steps $N$. $N$ in (f)-(g) is set as $50$.}
		\label{Spiral_curve}
	\end{figure}
	
	\begin{figure}
		\centering
		\includegraphics[width=0.9\linewidth,trim=0 0 0 0,clip]{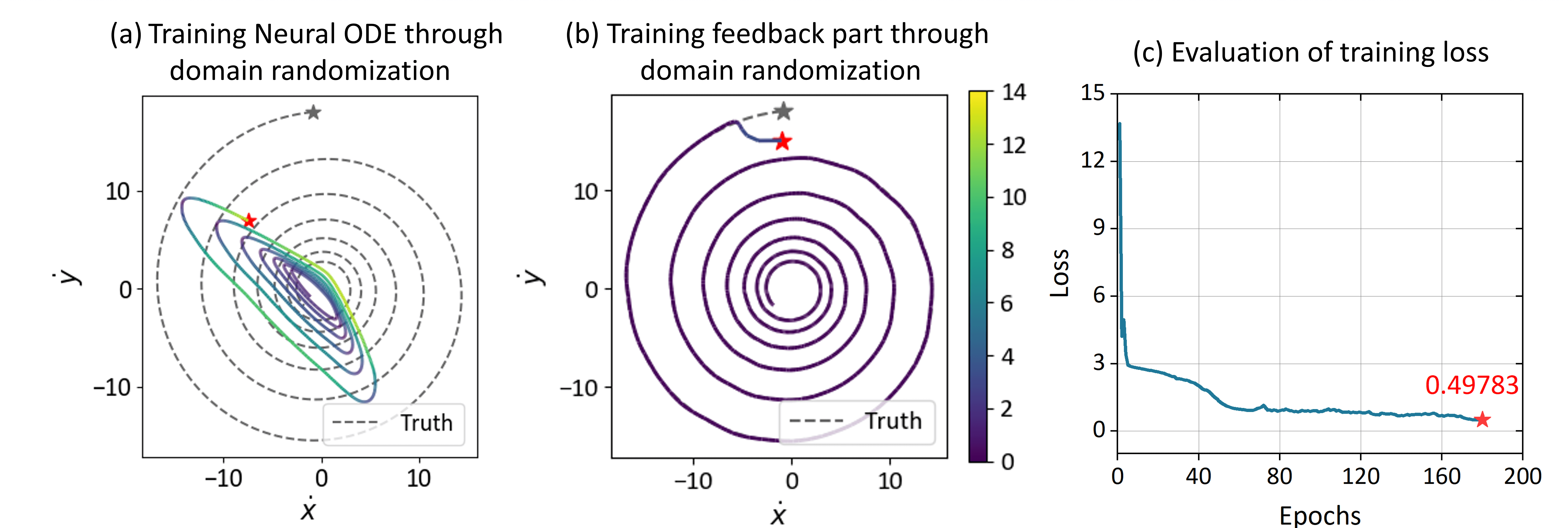}
		\caption{Learning with domain randomization. (a): Train the neural ODE through domain randomization. It can be seen that the learning performance of latent dynamics on the nominal task (Figure \ref{Spiral_curve} (a)) degrades as inducing domain randomization, in comparison with Figure \ref{Spiral_curve} (c). Previous works usually try to scale up neural networks to approach the previous performance. (b): Freeze the neural ODE after training on the nominal task and train the feedback part through domain randomization. The feedback neural network maintains the previous performance on the nominal task. (c) The training loss of the feedback part. Note that the neural ODE employed in (a) and (b) have the same architectures as the one in Figure \ref{Spiral_curve} (c).}
		\label{domain_test}
	\end{figure}
	
	For the spiral example, Figure \ref{domain_test} (b) presents the learning performance of the feedback neural network on the nominal task. It can be seen that the feedback neural network can precisely capture the latent dynamics, maintaining the previous accuracy performance of Figure \ref{Spiral_curve} (c). Moreover, the feedback neural network also has the generalization performance on randomized cases, as shown in Appendix Figure \ref{Feedback_neuro_test}. Figure \ref{domain_test} (c) further provides the evolution of training loss of the feedback part on the spiral example. More training details are provided in {Appendix \ref{appen_feedback}}.

	\section{Empirical study} \label{Experiments}
	\subsection{Trajectory prediction of an irregular object} \label{trj_pre_exam}
	
	\begin{wrapfigure}{r}{0.68\textwidth}
		\centering
		\includegraphics[width=1\linewidth]{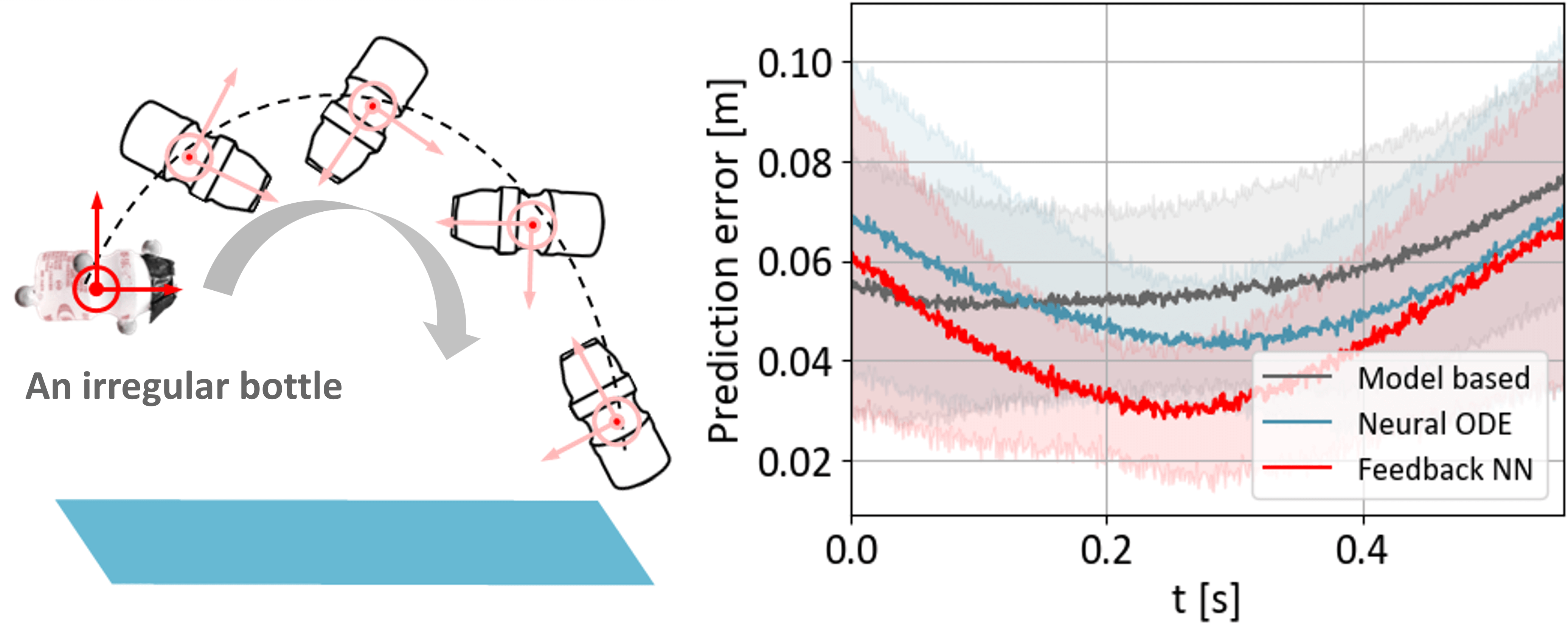}
		\caption{\textcolor{black}{Trajectory prediction results of an irregular bottle. \textit{Left:} The irregular bottle is thrown out by hand and performs an approximate parabolic motion. \textit{Right:} The prediction errors with different methods. The prediction horizon is set as $0.5\ s$. The colored shaded area represents the standard deviations of all $9$ test trajectories.}}
		\label{exp1_error}
	\end{wrapfigure}
	
	Precise trajectory prediction of a free-flying irregular object is a challenging task due to the complicated aerodynamic effects. Previous methods can be mainly classified into model-based scheme \citep{2001ICRA, 2011ICRA_ETH, 2012ICRA_MPC} and learning-based scheme \citep{2014TR0_EPEL, 2021neural}. With historical data, model-based methods aim at accurately fitting the drag coefficient of an analytical drag model, while learning-based ones try to directly learn an acceleration model using specific basis functions. However, the above methods lack of online adaptive ability as employing. Benefiting from the feedback mechanism, our feedback neural network can correct the learned model in real time, leading to a more generalized performance in cases out of training datasets. 
	
	We test the effectiveness of the proposed method on an open-source dataset \citep{10288520}, in comparison with the model-based method \citep{2001ICRA, 2011ICRA_ETH, 2012ICRA_MPC} and the learning-based method \citep{chen2018neural}. The objective of this mission is to accurately predict the object's position after $0.5\ s$, as it is thrown by hand. $21$ trajectories are used for training, while $9$ trajectories are used for testing. The prediction result is presented in Figure \ref{exp1_error}. It can be seen that the proposed feedback neural network achieves the best prediction performance. Moreover, the predicted positions and learned latent accelerations of all test trajectories are provided in Figure \ref{exp1_pos} and Figure \ref{exp1_acc}, respectively. Implementation details are provided in Appendix \ref{appen_traj}.

	\subsection{\textcolor{black}{Model predictive control of a quadrotor}}
	
	MPC works in the form of receding-horizon trajectory optimizations with a dynamic model, and then determines the current optimal control input. Approving optimization results highly rely on accurate dynamical models. Befitting from the powerful representation capability of neural networks for complex real-world physics, noticeable works \citep{torrenteDataDrivenMPCQuadrotors2021, salzmannRealtimeNeuralMPCDeep2023, sukhijaGradientBasedTrajectoryOptimization2022} have demonstrated that models incorporating first principles with learning-based components can enhance control performance. However, as the above models are offline-learned within fixed environments, the control performance would degrade under uncertainties in unseen environments.
	
	In this part, the proposed feedback neural network is employed on the quadrotor trajectory tracking scenario concerning model uncertainties and external disturbances, to demonstrate its online adaptive capability. In offline training, a neural ODE is augmented with the nominal dynamics firstly to account for aerodynamic residuals. The augmented model is then integrated with an MPC controller. Note that parameter uncertainties of mass, inertia, and aerodynamic coefficients, and external disturbances are all applied in tests, despite the neural ODE only capture aerodynamic residuals in training. For the feedback neural network, the proposed multi-step prediction strategy is embedded into the model prediction process in MPC. Therefore, the formed feedback-enhanced hybrid model can effectively improve prediction results, further leading to a precise tracking performance. More implementation details refer to Appendix \ref{appen_mpc}.
	
	\subsubsection{Learning aerodynamic effects}
	
	\begin{figure}
		\centering
		\includegraphics[width=0.7\linewidth]{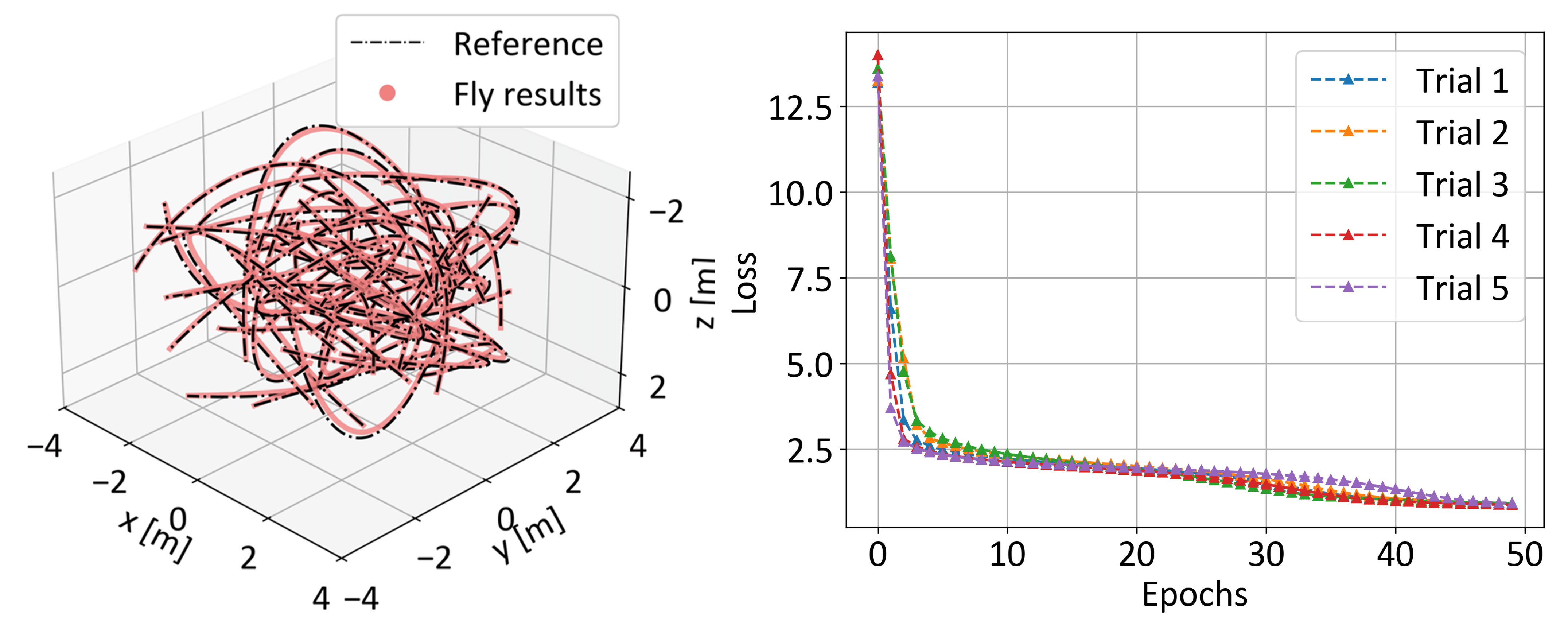}
		\caption{\textcolor{black}{Training sets and convergence procedures. \textit{Left:} Collected trajectories used for training. We first randomly sample positional waypoints in a limited space, followed by optimizing polynomials that connect these waypoints through the minimum snap method \citep{mellingerMinimumSnapTrajectory2011a}. Then the quadrotor with the baseline controller from \cite{jia2022RAL} is commended to follow planned trajectories, yielding real fly results as the training set. $40$ trajectories are collected with the length of $200$ discrete nodes each.} \textit{Right:} Training curves of $6$ random trials. All training trials converged rapidly thanks to stable integration and end-to-end analytic gradients. 
		}
		\label{aero_learn}
	\end{figure}
	
	While learning the dynamics, the augmented model requires the participation of external control inputs, i.e., motor thrusts. Earning a quadrotor model augmented with a neural ODE could be tricky with end-to-end learning patterns since the open-loop model are intensively unstable, leading to the diverge of numerical integration. To address this problem, a baseline controller from \cite{jia2022RAL} is applied to form a stable closed-loop system. The \textit{adjoint sensitive method} is employed in \citet{chen2018neural} to train neural ODEs without considering external control inputs. We provide an alternative training strategy concerning external inputs in Appendix \ref{train_input}, from the view of optimal control. \textcolor{black}{Figure \ref{aero_learn} shows training trajectories and convergence procedures.} $5$ trials of training are carried out, each with distinct initial values for network parameters. The trajectory validations are carried out using $3$ randomly generated trajectories (Figures \ref{traj_val_poseatt}-\ref{traj_val_states3}). More learning details refer to Appendix \ref{appen_aerolearn}.
	
	\subsubsection{Flight tests}
	%\begin{figure}
	%	\centering
	%	\includegraphics[width=1\linewidth]{Figs/simulation_result2.png}
	%	\caption{Tracking the \textit{Lissajous} trajectory using MPC with different prediction models.}
	%	\label{sim_mpc}
	%\end{figure}
	\begin{figure}
		\centering
		\includegraphics[width=0.9\linewidth]{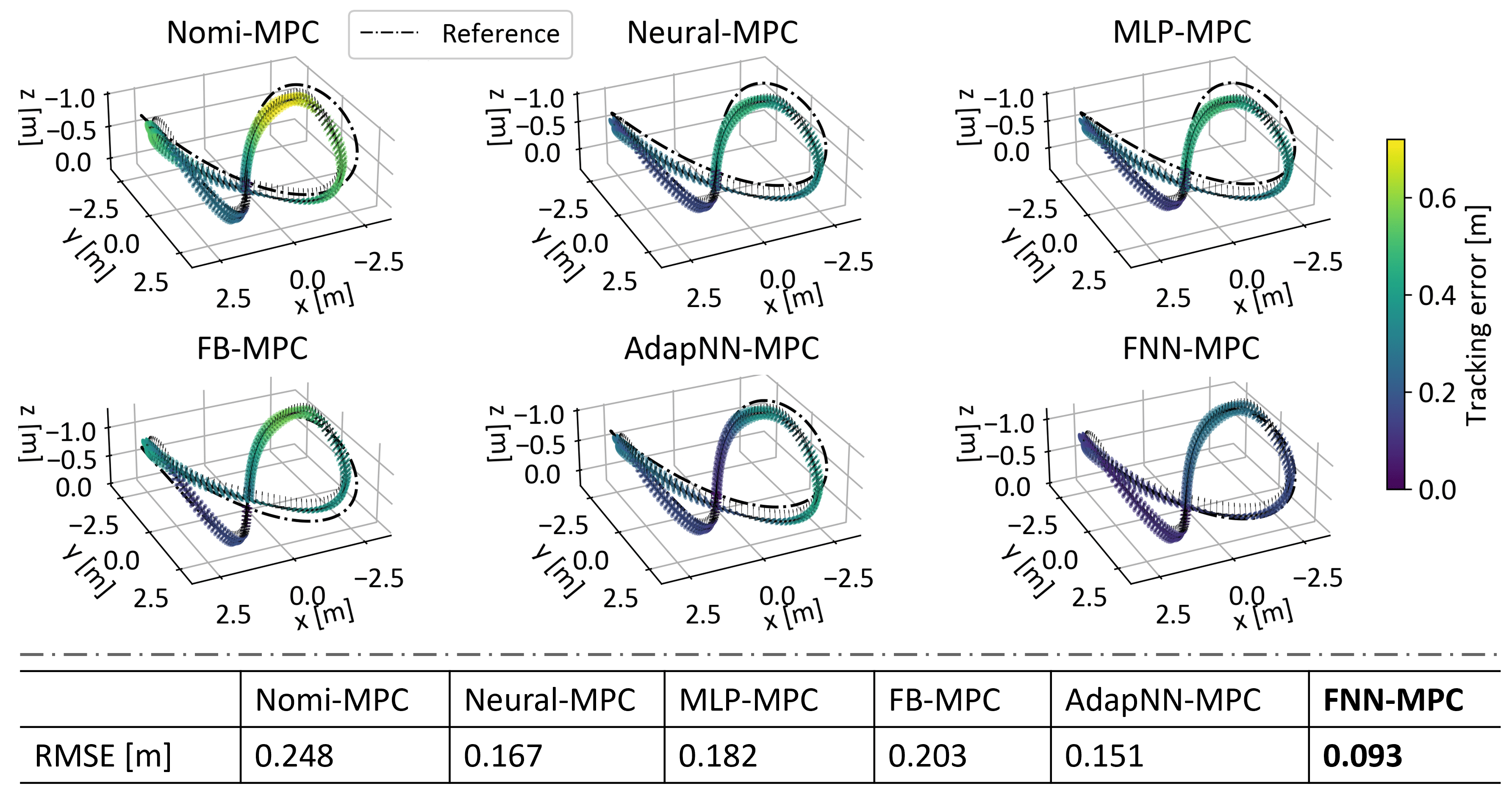}
		\caption{\textcolor{black}{Tracking the \textit{Lissajous} trajectory using MPC with different prediction models.}}
		\label{sim_mpc}
	\end{figure}
	
	\textcolor{black}{In tests, MPC is implemented with six different models: the nominal model (\ref{quadrotor_model}), the neural ODE augmented model (Section \ref{appen_aerolearn}), the feedforward neural network augmented model \citep{saviolo2023learning}, the feedback enhanced nominal model, the adaptive neural network augmented model \citep{cheng2019human} and the proposed feedback neural network, abbreviated as Nomi-MPC, Neural-MPC, MLP-MPC, FB-MPC, AdapNN-MPC, and FNN-MPC, for the sake of simplification. More details of all compared methods refer to Section \ref{benchmark_MPC}.} Moreover, $37.6 \%$ mass uncertainty, $[40 \%, 40 \%, 0]$ inertia uncertainties, $[14.3 \%, 14.3 \%, 25.0 \%]$ drag coefficient uncertainties, and $[0.3, 0.3, 0.3] N$ translational disturbances are applied. \textcolor{black}{The flight results on a \textit{Lissajous} trajectory (out of training set) are presented in Figure \ref{sim_mpc}. The tracking performance is evaluated by root mean square error (RMSE).}
	
	\textcolor{black}{It can be seen the Neural-MPC outperforms the Nomi-MPC since intricate aerodynamic effects are captured by the neural ODE. Moreover, the performance of MLP-MPC is relatively unsatisfactory compared with the Neural-MPC. The reason can be attributed to its single-step training manner instead of the multi-step one of the Neural-MPC, leading to a poor multi-step prediction. However, because unseen parameter uncertainties and external disturbances are not involved in the training set, the Neural-MPC still has considerable tracking errors. Due to the adaptive ability of the last layer, AdapNN-MPC can handle a certain level of uncertainty. In contrast, FNN-MPC achieves the best tracking performance. The reason can be attributed to the multi-step prediction of the feedback neural network improves the prediction accuracy subject to multiple uncertainties, as shown in Figure \ref{dx_pred1}.}
	
	\section{Related work}
	
	\subsection{Neural ODEs}
	Most dynamical systems can be described by ODEs. The establishments of ODEs rely on analytical physics laws and expert experiences previously. To avoid such laborious procedures, \citet{chen2018neural} propose to approximate ODEs by directly using neural networks, named neural ODEs. The prevalent residual neural networks \citep{he2016deep} can be regarded as an \textit{Euler} discretization of neural ODEs \cite{marion2024implicit}. The universal approximation property of neural ODEs has been studied theoretically \citep{zhang2020approximation, teshima2020universal, li2022deep}, which show the \textit{sup-universality} for $C^2$ diffeomorphisms maps \citep{teshima2020universal} and $L^p$\textit{-universality} for general continuous maps \citep{li2022deep}. \citet{marion2024generalization} further provides the generalization bound (i.e., upper bound on the difference between the theoretical and empirical risks) for a wide range of parameterized ODEs.

	\subsection{Generalization of neural networks}
	In classification tasks, neural network models face the generalization problem across samples, distributions, domains, tasks, modalities, and scopes \citep{rohlfs2022generalization}. Plenty of empirical strategies have been developed to improve the generalization of neural networks, such as model simplification, fit coarsening, and data augmentation for sample generalization, identification of causal relationships for distribution generalization, and transfer learning for domain generalization. %More details of these approaches on classification tasks can refer to \citet{rohlfs2022generalization}. Here, we mainly review state-of-the-art research related to continuous-time prediction missions.
	% We mainly focus on the first three forms, in which the samples and objective functions in test cases are unseen in the training set, while the input and output formats are consistent with the training set. 
	%\paragraph{Regulation} Recently, \citet{rohlfs2022generalization} has shown despite many regularization methods have been used in various classified neural networks on the ImageNet-V2 task, the overfitting phenomenon has been stubbornly persistent on the test set.
	
%	\subsection{Domain randomization}
	Domain randomization \citep{tobin2017domain, peng2018sim} has shown promising effects to improve the generalization for sim-to-real transfer applications, such as drone racing \citep{kaufmann2023champion}, quadrupedal locomotion \citep{choi2023learning}, and humanoid locomotion \citep{radosavovic2024real}. The key idea is to randomize the system parameters, noises, and perturbations in simulation so that the real-world case can be covered as much as possible. Although the system's robustness can be improved, there are two costs to pay. One is that the computation burden in the training process is dramatically increased. The other is that the training result has a certain of conservativeness since the training performance is an average of different scenarios, instead of a specific case.

	\subsection{\textcolor{black}{Real-time retraining and adaptation}}
	
	\textcolor{black}{Recently, online continual learning \citep{ghunaim2023real} and test-time adaptation \citep{liang2024comprehensive} have emerged as promising solutions to handle unknown test distribution shifts. Online continual learning focuses on the reduction of real-time training load, aiming at generalizing across new tasks while maintaining performance on previous tasks. Test-time adaptation tries to utilize real-time unlabeled data to obtain self-adapted models. For example, an extended \textit{kalman} filter-based adaptation algorithm with a forgetting factor is developed by \cite{abuduweili2020robust} to generalize neural network-based models. Moreover, in order to improve the flexibility of neural networks, the last layer of networks can be regarded as a weighted vector, which can be adjusted adaptively according to real-time state feedback \citep{cheng2019human, o2022neural, richards2022control, saviolo2022active}. The training for separating the last layer and front structure can be carried out within a bi-level optimization framework. In such a paradigm, the uncertainty out of training sets is reflected on the last layer of networks, which can be online adjusted in a control-oriented \citep{richards2022control} or regression-oriented \citep{cheng2019human,o2022neural} fashion. \citet{9650517} further develops real-time weight adaptation laws for all layers of feedforward neural networks, with stability guarantees.}
	% However, the performance of this strategy is limited by the inherent shortcoming of adaptive control, i.e., the time-invariant assumption on the weighted vector.
	
	\textcolor{black}{Different from the above retraining or adaptation strategy, the presented method directly corrects the learned latent dynamics of neural ODEs with real-time feedback, yielding a two-DOF network structure. Moreover, the feedback can be learned in a neural form. Integrating adaptive neural ODEs with the developed feedback mechanism may be a valuable research direction (Section \ref{sec_adaptFNN}).}
	% \paragraph{Other techniques.} To accurately predict unknown uncertainties in a dynamical system, \citet{10288520} developed a uncertainty observer, which employed the learned uncertainty modeled by Koopman operator to direct the prediction direction and the state feedback to attenuate learning residues. 
	
	%\section{Limitations}
	
	% \paragraph{Constrained to cases with continuous-time input.} The feedback efficacy requires a continuous-time input $\bm{x}(t)$, so that the learned ODE model can be constantly adjusted according to the difference between the measured and predicted state. Thus, the developed feedback neural network is limited to continuous-time prediction tasks that evolve over time. For discretely operated tasks like image recognition, the feedback neural network needs further improvement. 
	
	% \paragraph{Construction of training datasets.} In this work, the datasets for training the neural feedback are collected through domain randomization, which requires a known simulation model in advance, albeit potentially inaccurate. For cases lacking simulation models, the training datasets can be constructed by perturbing the real system artificially. 
	
	\section{Conclusion}
	Inspired by the feedback philosophy in biological and engineering systems, we proposed to incorporate a feedback loop into the neural network structure for the first time, as far as we known. In such a way, the learned latent dynamics can be corrected flexibly according to real-time feedback, leading to better generalization performance in continuous-time missions. The convergence property under a linear feedback form was analyzed. Subsequently, domain randomization was employed to learn a nonlinear neural feedback, resulting in a two-DOF neural network. Finally, applications on trajectory prediction of irregular objects and MPC of robots were shown.
	
	\paragraph{Limitations.} \textcolor{black}{First, the feedback gain and decay rate for the linear feedback neural network need to be tuned manually. Future work will try to build a bi-level optimization framework to train neural ODE while searching the optimal gains. Such joint optimization manner can also capture the coupled information between feedforward neural ODE and feedback network.} Morevoer, the presented nonlinear neural form is preliminarily tested in Section \ref{Domain_rand}. Future work will pursue to exploit its potential in more complex tasks. 
	
	% As analyzed in Theorem \ref{thm1}, accurate prediction necessitates a convergence time. Specifically, the convergence time for multi-step prediction scales linearly with the prediction horizon. While experimental results on selected examples indicate that a satisfactory prediction performance can be attained after a short convergence period, future demonstrations across a broader range of cases, particularly those demanding transient performance, are necessary.
	
	% \subsubsection*{Acknowledgments}
	
	\bibliography{iclr2025_conference}
	\bibliographystyle{iclr2025_conference}
	
	\clearpage
	
	\appendix
	\section{Appendix}
	
	\subsection{Training neural ODEs with external inputs} \label{train_input}
	Firstly, we formulate the learning problem as an optimization problem:
	\begin{align}
		\label{eq:oc}
		&\min_{\bm{\theta}} \sum_{i=1}^{N-1} \;l_i(\bm{x}_i, {\bm{x}_i}^r, \bm{\theta})
		+ l_N(\bm{x}_N, {\bm{x}_N}^r)\\
		& s.t.\;\;\bm{x}_{i+1} = 
		\bm{f}_{neural}^d(\bm{x}_i, \bm{I}_i, t_i, \bm{\theta})
	\end{align}
	where \textcolor{black}{$\bm{x}_i \in \mathbb{R}^n$} and \textcolor{black}{$\bm{I}_i \in \mathbb{R}^m$} denotes the model rollout state and the real sample at time $t_i$ respectively, $\bm{x}_{i+1} = 
	\bm{f}_{neural}^d(\bm{x}_i, \bm{I}_i, t_i, \bm{\theta})$
	refers to the discretized integration of $\bm{f}_{neural}\left( {\bm{x}(t),\bm{I}(t), t,\bm{\theta}} \right)$ with fixed discrete step since real-world state trajectories \textcolor{black}{${\bm{x}_{i}}^r \in \mathbb{R}^n$} are sequentially recorded with fixed timestep based on the onboard working frequency. \textcolor{black}{$l_i(\cdot)\in \mathbb{R}$, $l_N(\cdot)\in \mathbb{R}$} are defined to quantify  the state differences between model rollout $\bm{x}_i$ and real-world state ${\bm{x}_{i}}^r$. In this article, we select the functions in a weighted quadratic form, i.e., $({\bm{x}_{i}}^r-{\bm{x}_{i}})^\top \bm{L}_i
	({\bm{x}_{i}}^r-{\bm{x}_{i}})$. 
	
	By utilizing the optimal control theory and variational method, the first-order optimality conditions of the learning problem could be derived as
	\begin{align}
		& \begin{aligned}
			H &= J + 
			\sum_{i=1}^{N-1} \bm{\lambda}_{i+1}^\top
			\bm{f}_{neural}^d(\bm{x}_i, \bm{I}_i, t_i, \bm{\theta})
		\end{aligned}\\
		&\bm{x}_{i+1} = \nabla_{\bm{\lambda}} H 
		= \bm{f}_{neural}^d(\bm{x}_i, \bm{I}_i, t_i, \bm{\theta}),
		\;\bm{x}_1 = \bm{x}(0) \label{rollout}\\
		& \bm{\lambda}_i = \nabla_{\bm{x}}H 
		= \nabla_{\bm{x}} l_i + (\frac{\partial \bm{f}_{neural}^d}{\partial{\bm{x}}})^\top
		\bm{\lambda}_{i+1} \label{adjoint},
		\;\bm{\lambda}_N = \frac{\partial{l_N}}{\partial{\bm{x}_N}}\\
		& \begin{aligned}
			\frac{\partial{H}}{\partial{\bm{\theta}}}
			= \sum_{i=1}^{N-1} \;\nabla_{\bm{\theta}} l_i + \bm{\lambda}_{i+1}^\top
			\nabla_{\bm{\theta}} \bm{f}_{neural}^d = 0
		\end{aligned} \label{gradient}
	\end{align}
	where \textcolor{black}{$H\in \mathbb{R}$} stands for the \textit{Hamiltonian} of this problem, 
	$J$ is the objective function in (\ref{eq:oc}). Solving (\ref{gradient}) could be done by applying gradient descent on $\bm{\theta}$. The gradient is analytic and available (summarized in Algorithm \ref{gradient_algo}) by sequentially doing forward rollout (\ref{rollout}) of $\bm{x}$ and backward rollout (\ref{adjoint}) of $\bm{\lambda}$, where the latter one is also known as the term \textit{adjoint solve}  or \textit{reverse-mode differentiation}.
	
	\begin{algorithm}[htb]
		\caption{Analytic gradient computation}
		\label{gradient_algo}
		\begin{algorithmic}[1]
			\REQUIRE Learning objective $l_i(\cdot), l_N(\cdot)$; model $\bm{f}_{neural}^d$; 
			continuous trajectories $\{\bm{x}^r(t), \bm{I}(t), t\}$.
			\ENSURE Gradient ${\partial H}/{\partial\bm{\theta}}$.
			\STATE $\bm{x} \leftarrow$ Forward rollout of $\bm{f}_{neural}^d$ using (\ref{rollout});
			\STATE Compute $\nabla_{\bm{x}} l_i$, $\nabla_{\bm{x}} \bm{f}_{neural}^d$, $\nabla_{\bm{x}_N} l_N$, $\nabla_{\bm{\theta}} l_i$, $\nabla_{\bm{\theta}} \bm{f}_{neural}^d$;
			\STATE $\bm{\lambda} \leftarrow$ Reverse rollout of $\nabla_{\bm{x}}H$ using (\ref{adjoint}) ;
			\STATE ${\partial H}/ {\partial\bm{\theta}} \leftarrow$ Compute gradient using (\ref{gradient}).
		\end{algorithmic}
	\end{algorithm}
	
	\begin{algorithm}[htb]
		\caption{Training neural ODEs with external inputs}
		\label{learning algorithm}
		\begin{algorithmic}[1]
			\REQUIRE{Learning objective $l_k(\cdot), l_N(\cdot)$;
				mini-batch size $s$; trajectories $\mathcal{D}^{tra} = \{\bm{x}^r(t), \bm{I}(t), t\}$.}
			\ENSURE{Neural ODE $\bm{f}_{neural}^d$.} \\
			\COMMENT\ {Network parameters $\bm{\theta}$; slice $\mathcal{D}^{tra}$ into $M$ segments $\{\mathcal{D}^{tra}_{j = 1, \cdots, M}\}$ with $s$ length each.}
			\REPEAT 
			\FOR{$\{{\bm{x}^r}_{1:s}, \bm{I}_{1:s}, t_{1:s}\}$ in $\{\mathcal{D}^{tra}_{j = 1, \cdots, M}\}$}
			\STATE Compute analytic gradient ${\partial H}/{\partial\bm{\theta}}$ using Algorithm \ref{gradient_algo};
			\STATE Compute learning rate $\alpha$ using \textit{Adam} or other methods;
			\STATE $\bm{\theta} \leftarrow \bm{\theta} 
			- \alpha \cdot {\partial{H}}/{\partial{\bm{\theta}}}$;
			\ENDFOR
			\UNTIL{\textbf{convergence}}
		\end{algorithmic}
	\end{algorithm}
	
	The gradient computing only supports for a single continuous state trajectory, and the computational complexity scales linearly with the trajectory length. However, in real-world applications, multiple trajectory segments with a long horizon might be produced. We introduce mini-batching as well as stochastic optimization methods to deal with the drawback, as summarized in Algorithm \ref{learning algorithm}.
	The learning rate could be determined using \textit{Adam} or other stochastic gradient descent-related methods.
	
	\subsection{Proof of Theorem \ref{thm1}} \label{proof_thm1}
	
	\subsubsection{Continuous-time stability}
	
	The proof procedure requires the \textit{Lyapunov} stability analysis arising from the traditional control field \citep{slotine1991applied}. At first, define a \textit{Lyapunov} function
	\begin{align} \label{lyapunov}
		V(t) = \frac{1}{2}{\bm{{\tilde {x}}}}(t)^T{\bm{{\tilde {x}}}}(t).
	\end{align}
	
	Differentiate \textcolor{black}{$V(t)\in \mathbb{R}$}, yielding
	\begin{align} \label{lyapunov_devi}
		\dot V\left( t \right) & = \bm{{\tilde {x}}}{\left( t \right)^T}\bm{\dot{\tilde {x}}}\left( t \right) \notag \\
		&\mathop  = \limits^{(a)} \bm{{\tilde {x}}}{\left( t \right)^T}\left( {- \bm{L}{\bm{{\tilde {x}}}}(t) + \Delta \bm{f}(t)} \right) \notag \\
		& = -\bm{{\tilde {x}}}{\left( t \right)^T}\bm{L}{\bm{{\tilde {x}}}}(t) + \bm{{\tilde {x}}}{\left( t \right)^T}\Delta \bm{f}(t) \notag \\
		& \mathop  \le \limits^{(b)} \textcolor{black}{-\frac{\lambda_m(\bm{L})}{2} \bm{{\tilde {x}}}{\left( t \right)^T}\bm{{\tilde {x}}}{\left( t \right)} + \frac{1}{2\lambda_m(\bm{L})}\gamma^2}
	\end{align}
	\textcolor{black}{where $(a)$ and $(b)$ are driven by substituting (\ref{error_dyna}) and using \textit{Young}'s inequality $\tilde{\bm{x}}^{T} \Delta \bm{f}(t)\le \sqrt{\lambda_{m}(\bm{L})}\|\tilde{\bm{x}}\| \frac{\gamma}{\sqrt{\lambda_{m}(\bm{L})}}\le \frac{\lambda_{m}(\bm{L})\|\tilde{\bm{x}}\|^{2}}{2}+\frac{\gamma^{2}}{2 \lambda_{m}(\bm{L})}$, respectively.} By combing (\ref{lyapunov}) and (\ref{lyapunov_devi}), it can be rendered that 
	\begin{align}
		\textcolor{black}{\dot V\left( t \right) \le -\lambda_m(\bm{L})V(t) + \frac{1}{2\lambda_m(\bm{L})}\gamma^2.}
	\end{align}
	
	By solving the first-order ordinary differential inequality, one can achieve
	\begin{align}
		\textcolor{black}{0 \le V(t) \le e^{-\lambda_m(\bm{L})t}\left[V(0)- \delta\right] + \delta}
	\end{align}
	with \textcolor{black}{$\delta = \frac{{\gamma}^2}{\left[2\lambda_m(\bm{L})^2\right]}\in \mathbb{R}$}. It can be further implied that
	
	\begin{align} \label{bounded_converg}
		\textcolor{black}{\mathop {\lim }\limits_{t \to \infty }\|\bm{\tilde{x}}(t)\|\le \frac{\gamma}{{\lambda_m(\bm{L})}}}
	\end{align}
	which shows that even with learning residuals, the state observation error can converge to a bounded set \textcolor{black}{$\mathcal{B}_1 = \left\{{\bm{\tilde x}}(t)\in\mathbb{R}^n:\left\|{\bm{\tilde x}}(t)\right\|\le{\gamma}/{{\lambda_m(\bm{L})}}\right\}$} with the feedback modification. It can be seen that the upper bound can be regulated to arbitrarily small by increasing $\lambda_m(\bm{L})$.
	
	Finally, from (\ref{error_dyna}), it can be concluded that the derivative of the state observation error can also converge to a bounded set \textcolor{black}{$\mathcal{B}_2 = \left\{{\bm{\dot{\tilde x}}}(t)\in\mathbb{R}^n:\left\|{\bm{\dot{\tilde x}}}(t)\right\|\le{\gamma}\lambda_M(\bm{L})/{{\lambda_m(\bm{L})}} + \gamma\right\}$} with the maximum eigenvalue of feedback gain $\lambda_M(\bm{L})$.
	
	\renewcommand*{\thefigure}{S1}
	\begin{figure}
		\centering
		\includegraphics[width=0.8\linewidth,trim=0 0 0 0,clip]{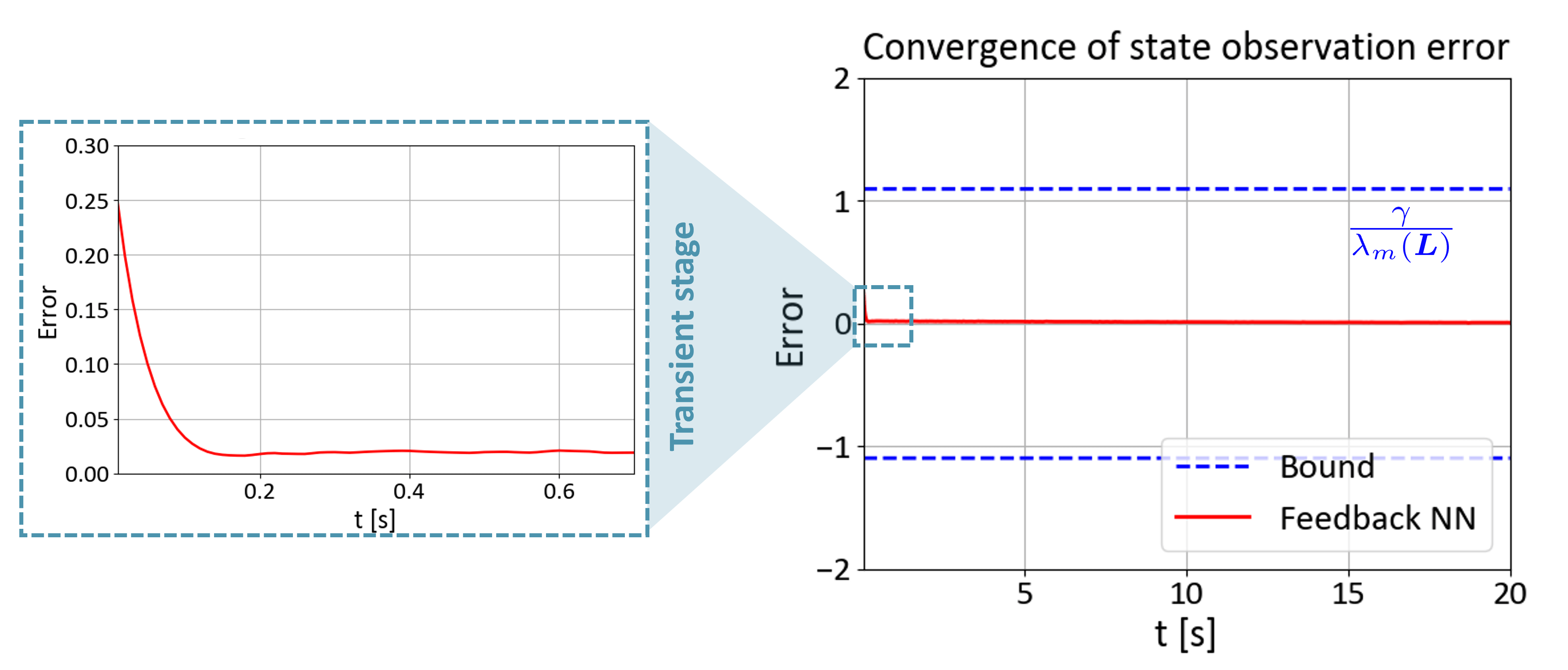}
		\caption{\textcolor{black}{In the spiral curve example, the state observation error can converge to the theoretical bounded set.}}
		\label{bound_fig}
	\end{figure}
	
	\textcolor{black}{Figure \ref{bound_fig} shows the convergence of the state observation error in the spiral curve example. Related simulational setup is the same as Figure \ref{Spiral_curve}. It can be seen that the theoretical bounded set is relatively conservative, as the result of the sufficiency of \textit{Lyapunov} theorem.}
	
	\textcolor{black}{As for unbounded learning residuals violating Assumption \ref{assum}, we think it is still a major challenge in learning fields. It reveals that neural networks have completely lost the representational ability to target uncertainties. The best strategy may be retraining the networks based on fresh datasets, like an online continual learning mission \citep{ghunaim2023real}.}
	
	%It can be seen from Figure \ref{Spiral_curve_comp}(d), the upper bound of uncertainty norm, i.e., $\gamma$ is approximately $21.9$. ${\lambda_m(\bm{L})}$ is set as $20$. Thus the upper bound of convergence set is 
	
	\subsubsection{\textcolor{black}{Discrete-time stability}}
	
	\textcolor{black}{As the developed procedure in (\ref{sys1_1})-(\ref{sys2_2}) is discrete-time, we further provide the convergence analysis in a discrete-time form.}
	
	\textcolor{black}{Discretizing (\ref{prediction_ODE}), it can be obtained that}
	\begin{align} \label{prediction_ODE_dis}
		\textcolor{black}{\bm{x}(t_{k}) = \bm{x}(t_{k-1}) + T_s\bm{f}(t_{k-1}).}
	\end{align}
	
	\textcolor{black}{Define state observer error ${\bm{\tilde x}}(t_k) = \bm{{x}}(t_k) - \bm{\hat{x}}(t_k)$. By making a difference between (\ref{prediction_ODE_dis}) and (\ref{sys2_2}), one can achieve}
	\begin{align} \label{discrete_error}
		\textcolor{black}{{\bm{\tilde x}}(t_k)} & = \textcolor{black}{{\bm{\tilde x}}(t_{k-1}) + T_s(\bm{f}(t_{k-1}) - \bm{\hat{f}}_{neural}(t_{k-1}))} \notag \\
		&\mathop  = \limits^{(c)} \textcolor{black}{{\bm{\tilde x}}(t_{k-1}) +  T_s(\bm{{f}}_{neural}(t_{k-1}) - \bm{\hat{f}}_{neural}(t_{k-1}) + \Delta \bm{f}(t) )}\notag \\
		&\mathop  = \limits^{(d)} \textcolor{black}{(\bm{I}- T_s\bm{L}){\bm{\tilde x}}(t_{k-1}) + T_s\Delta \bm{f}(t)},
	\end{align}
	\textcolor{black}{where $(c)$ and $(d)$ are driven by substituting (\ref{pure_NN}) and (\ref{feedback_NN}), respectively. With the bounded Assumption \ref{assum}, if the observer gain $\bm{L}$ makes $(\bm{I}- T_s\bm{L})$ stable, i.e., $\rho(\bm{I}- T_s\bm{L}) < 1$, system (\ref{discrete_error}) is input-to-state stable (ISS) \citep{YAN2023111238}. $\rho(\cdot)$ denotes the spectral radius.}
	
	%Finally, from (\ref{error_dyna}), it can be concluded that the derivative of the state observation error can also converge to a bounded set, i.e.,
	%\begin{align}
	%	\textcolor{black}{\mathop {\lim }\limits_{t \to \infty }\|\bm{\dot{\tilde {x}}}(t)\|  \le 2\gamma.}
	%\end{align}
	
	% We then recall that $\bm{\tilde{x}}(t+T_s) = \bm{{x}}(t+T_s)-\bm{\hat{x}}(t+T_s)$ with one-step prediction. From \eqref{feed_ODE}, $\bm{\hat{x}}(t+T_s)$ can be predicted with last state $\bm{\hat{x}}(t)$ and current measurements $\bm{{x}}(t)$. Hence, it can be seen that the one-step prediction error $\bm{\tilde{x}}(t+T_s)$ will converge to the upper bound in \eqref{bounded_converg}. Moreover, 
	
	\subsection{Implementation details of spiral case}
	
	\subsubsection{Spiral dynamics} \label{spiral_dyna}
	The adopted spiral model is formalized as
	\begin{align}
		\bm{\dot{x}}\left( t \right) = \left[ {\begin{array}{*{20}{c}}
				{ - \eta }&\omega \\
				{ - \omega }&{ - \eta }
		\end{array}} \right]\bm{x}\left( t \right) + \left[ {\begin{array}{*{20}{c}}
				\varepsilon \\
				\varepsilon 
		\end{array}} \right]
	\end{align}
	with period \textcolor{black}{$\omega\in \mathbb{R}$, decay rate $\eta\in \mathbb{R}$, and bias ${\varepsilon}\in \mathbb{R}$}.
	
	In tests, the initial value is set as $\bm{x}(0) = \left[9, 0\right]^T$. For the nominal task, $\omega$, $\eta$, and ${\varepsilon}$ are set as $2$, $0.1$, and $0$, respectively.
	
	\subsubsection{Training details of neural ODE}
	
	The adopted MLP for training ODE has $3$ layers with $50$ hidden units and ReLU activation functions. The training datasets consist of $1000$ samples, discretized from $0\ s$ to $10\ s$ with $0.01\ s$ step size. In training, we use \textit{RMSprop} optimizer with the default learning rate of $0.001$. The network is trained with a batch size of $20$ for $400$ iterations. 
	
	\subsubsection{Training details of feedback part} \label{appen_feedback}
	
	As for the feedback part, we adopt MLP with $2$ hidden layers with $50$ hidden units each and ReLU activation functions. The training datasets are collected through domain randomization, with $20$ randomized cases, i.e., $\omega= \left\{0.8: +0.12: 3.08\right\}$, $\eta= \left\{0.04: +0.005: 0.135\right\}$, ${\varepsilon}= \left\{-24: +2.4: 21.6\right\}$. Each case consists of $1000$ samples, discretized from $0\ s$ to $20\ s$ with $0.02\ s$ step size. In training, we use \textit{RMSprop} optimizer with the learning rate of $0.01$. The network is trained with a batch size of $100$ for $2000$ iterations. 
	
	\subsubsection{Setup of gain adjustment test} \label{appen_gain}
	
	Figure \ref{ablation_gain} shows the ablation study on linear feedback gain and degree of uncertainty. In this test, feedback gain is selected from $\left\{0: +5: 45\right\}$ in order, and uncertainties are set as $\omega= \left\{0.8: +0.4: 4.4\right\}$, $\eta= \left\{0.04: +0.02: 0.22\right\}$, ${\varepsilon}= \left\{-24: +8: 96\right\}$ in order. The prediction step is set as $50$. The prediction results are evaluated using the means of $2$-$norm$ prediction errors.
	
	\subsection{\textcolor{black}{Implementation details of trajectory prediction of irregular objects}} \label{appen_traj}
	
	The input state of neural ODE consists of position and velocity. The adopted MLP for training latent ODE has $3$ hidden layers with $100$ hidden units each and ReLU activation functions. The training datasets consist of $21$ trajectories, with $1058$ samples each. The step size is $0.001\ s$. In training, we use \textit{Adam} optimizer with the default learning rate of $0.001$. The network is trained with a batch size of $20$ for $1000$ iterations. 
	
	Different from the one-step prediction strategy utilized in \citet{10288520} (modeled as dynamical systems concerning attitude), this work predicts future states in a forward-rolling way, learning a more precise result. For the compared drag model-based method, the drag coefficient comes from \citet{10288520} fitted by least squares. The prediction error in Figure \ref{exp1_error} is evaluated by the $2$-$norm$ of position prediction error.
	
	\subsection{Implementation details of model predictive control of a quadrotor} 
	
	\subsubsection{Quadrotor Preliminaries}
	A quadrotor dynamics can be defined as a state-space model with a $12$-dimensional state vector \textcolor{black}{$\bm{x} = [\bm{p}, \bm{v}, \bm{\Theta}, \bm{\omega}]^\top \in \mathbb{R}^{12}$} and a $4$-dimensional input vector \textcolor{black}{$\bm{u} = [T_1, T_2, T_3, T_4]^\top \in \mathbb{R}^4$} of motor thrusts.
	Two coordinate systems are defined, the earth-fixed frame $\bm{\mathcal{E}} = 
	\left\{\bm{X}_E, \bm{Y}_E, \bm{Z}_E\right\}$  and the body-fixed frame $\bm{\mathcal{B}} = \left\{\bm{X}_B, \bm{Y}_B, \bm{Z}_B\right\}$. The position \textcolor{black}{$\bm{p}\in \mathbb{R}^3$} and the velocity \textcolor{black}{$\bm{v}\in \mathbb{R}^3$} are defined in $\bm{\mathcal{E}}$ while the body rate \textcolor{black}{$\bm{\omega}\in \mathbb{R}^3$} is defined in $\bm{\mathcal{B}}$. The relationship between $\bm{\mathcal{E}}$ and $\bm{\mathcal{B}}$ is decided by the Euler angle \textcolor{black}{$\bm{\Theta}\in \mathbb{R}^3$}. The translational and rotational dynamics can be formalized as
	\begin{equation} \label{quadrotor_model}
		\begin{aligned}
			&\dot {\bm p} = \bm v, \quad \dot {\bm v} = \bm a = 
			-\frac{1}{m}\bm Z_B T + g\bm Z_E\\
			& \dot {\bm{\Theta}} = \bm{W}(\bm{\Theta})\bm{\omega},\quad
			\bm J \dot {\bm \omega}= 
			-\bm \omega \times (\bm J \bm \omega) 
			+ \bm \tau\\
			& [T, \bm \tau]^\top = \bm{C} 
			[T_1, T_2, T_3, T_4]^\top
		\end{aligned}
	\end{equation}
	where $g$ stands for the magnitude of gravitational acceleration, $\bm{W}(\cdot)$ refers to the rotational mapping matrix of \textit{Euler} angle dynamics and $\bm{C}$ is the control allocation matrix. We note the nominal dynamics of quadrotor as  $\bm{\dot x} = \bm{f}(\bm{x}, \bm{u})$.
	
	Next, differential flatness-based controller (DFBC) \citep{mellingerMinimumSnapTrajectory2011a} for the quadrotor is introduced, which is adopted here to form a closed-loop system for end-to-end learning that remains stable and differentiable numerical integration. By receiving the flat outputs \textcolor{black}{$\bar{\bm{\Psi}} = [\bm{p}, \bm{v}, \bm{a}, \bm{j}]\in \mathbb{R}^{12}$}, the positional signal and its higher-order derivatives, as the command signal, DFBC computes the desired motor thrusts for the actuators under the $12$-dimensional state feedback. By virtue of the differential flatness property of the quadrotor, one can covert the flat outputs into nominal states $\bm{x}$ and inputs $\bm{u}$ using related differential flatness mappings if the yaw motion remains zero.
	We note this controller as $[\bm{\dot z}, \bm{u}]^\top = \bm{\pi}(\bm{z}, \bm{x}, \bm{\bar \Psi})$, where $\bm{z}$ is auxiliary state of controller for the expression integrators and approximated derivatives in the rotational controller.
	
	\subsubsection{Implemention of learning aerodynamics effects} \label{appen_aerolearn}
	\textcolor{black}{In training, no external wind and parameter uncertainties exist, and the aerodynamic drag is modeled as $\bm{R}\bm{D}\bm{R}^\top \bm{v}$ \citep{faesslerDifferentialFlatnessQuadrotor2018}, where $\bm{R}$ refers to the current rotational matrix that maps the frame $\bm{\mathcal{B}}$ to the frame $\bm{\mathcal{E}}$, and $\bm{D} = diag\{[0.6, 0.6, 0.1]\}$ is a coefficient matrix which is fitted by real flight data from \cite{jia2022RAL}.}
	
	A neural ODE $\bm{f}_{neural}$ (with parameters $\bm{\theta}$) is augmented with the nominal dynamics to capture the aerodynamic effect, i.e., $\dot{\bm v} = \bm a = -\frac{1}{m}\bm Z_B T + g\bm Z_E + \bm{f}_{neural}(\bm{v}, \bm{\Theta}, \bm{\theta})$. A MLP with $2$ hidden layers with $36$ hidden units is adopted.
	
	End-to-end learning of $\bm{f}_{neural}$ could be done using the algorithm \ref{learning algorithm}, but a stable numerical integration is necessary. A closed-loop system of the augmented dynamics using DFBC is employed, noted as $[\dot{\bm{x}}, \dot{\bm{z}}]^\top = \bm{\Phi}([\bm{x}, \bm{z}]^\top, \bm{\bar \Psi})$. In the proposed algorithm, $[\dot{\bm{x}}, \dot{\bm{z}}]^\top$ turns out to be the new state and $\bm{\bar \Psi}$ becomes the auxiliary input instead of the input of the augmented dynamics $\bm{u}$.
	
	We generate $40$ $\bm{\bar \Psi}$ trajectories with the discrete nodes of $200$ each for learning by randomly sampling the positional waypoints in a limited space, followed by optimizing polynomials that connect these waypoints, as shown in Figure \ref{aero_learn}. For validations of the learned neural ODE, we generate another $3$ random $\bm{\bar \Psi}$ trajectories $2.5\times$ longer than that used in training, the result illustrated in Figures \ref{traj_val_poseatt}-\ref{traj_val_states3} indicates a good prediction on all $12$ states.
	
	\subsubsection{Implementation of MPC with Feedback Neural Networks} \label{appen_mpc}
	MPC works in the form of trajectory optimization (\ref{Standard_MPC}) with receding-horizon $N$ with a discrete dynamic model $\bm{f}_d$, to obtain the current optimal control input $\bm{u}_{0}$, while maintaining feasibility constraints $\bm{u}_i \in \mathbb{U}, \bm{x}_i \in \mathbb{X}$, i.e.,
	\begin{equation}
		\begin{aligned}
			& \min_{\bm{x}_{1:N}, \bm{u}_{0:N-1}}\;
			l_N(\bm{x}_N, \bm{x}_N^r) +
			\sum_{i=1}^{N}\; l_x(\bm{x}_i, \bm{x}_i^r) + 
			l_u(\bm{u}_i, \bm{u}_i^r)\\
			&\begin{aligned}
				s.t.\;\;
				&\bm{{x}}_{i+1}
				= \bm{f}_d(\bm{x}_i, \bm{u}_i),\; \bm{x}_0 = \bm{x}(0)\\
				&\bm{u}_i \in \mathbb{U},\;\bm{x}_i \in \mathbb{X}
			\end{aligned}
		\end{aligned}
		\label{Standard_MPC}
	\end{equation}
	where the objective functions \textcolor{black}{$l_x(\cdot)\in \mathbb{R}, l_u(\cdot)\in \mathbb{R}, l_N(\cdot)\in \mathbb{R}$} penalize the tracking error between model predicted trajectory $\{\bm{x}_{1:N}, \bm{u}_{1:N}\}$ and the up-comming reference trajectory $\{\bm{x}_{1:N}^r, \bm{u}_{1:N}^r\}$, where quadratic loss are often adopted. In this application, we make $l_x(\cdot) = l_N(\cdot) = (\bm{x} - \bm{x}^r)^\top \bm{Q}
	(\bm{x} - \bm{x}^r)$, $l_u(\cdot) = (\bm{u} - \bm{u}^r)^\top \bm{R} (\bm{u} - \bm{u}^r)$, where \textcolor{black}{$\bm{Q} = diag\{[\bm{100}_{3\times1},\bm{50}_{6\times1},\bm{1}_{3\times1}]\}\in \mathbb{R}^{12 \times 12}$} and \textcolor{black}{$\bm{R} = diag\{\bm{1}_{4\times1}\}\in \mathbb{R}^{4 \times 4}$}. The feasibility constraints $\bm{u}_i \in \mathbb{U}$ and $\bm{x}_i \in \mathbb{X}$ are normally designed using box constraints.  We make $\bm{0}_{4\times1} \leq \bm{u} \leq \bm{4}_{4\times1}$ to avoid control saturation and $|\bm{\Theta}| \leq \bm{{\pi}/{2}}_{3\times1}$ to avoid singularities while using \textit{Euler} angle-based attitude representation. The receding horizon length $N$ is set to be $10$.
	
	The key idea of using a feedback neural network augmented model is to apply the multi-step prediction mechanism to the model prediction process in MPC. The multi-step prediction algorithm requires the current feedback state $\bm{x}_0$ and current input $\bm{u}_1$ to update the sequence of $\hat{\bm{x}}_{1:N}$. The updated $\hat{\bm{x}}_{1:N}$ can be directly applied for the next receding horizon optimization. We choose a linear feedback gain of \textcolor{black}{$\bm{L} = diag\{\bm{3}_{12\times1}\}\in \mathbb{R}^{12 \times 12}$} with a decay rate of $0.1$. 
	
	\subsubsection{Benchmark comparisons} \label{benchmark_MPC}
	\textcolor{black}{In the quadrotor example, in order to show the effectiveness of the proposed feedback neural network, five other models are compared: the nominal model (\ref{quadrotor_model}), the neural ODE augmented model (Section \ref{appen_aerolearn}), the feedforward neural network augmented model \citep{saviolo2023learning}, the feedback enhanced nominal model, and the adaptive neural network augmented model \citep{cheng2019human}, abbreviated as Nomi-MPC, Neural-MPC, MLP-MPC, FB-MPC, and AdapNN-MPC, for the sake of simplification.}
	
	\textcolor{black}{The MLP augmented model employs the fully connected neural network to learn aerodynamic drag \citep{saviolo2023learning}. The feedback enhanced nominal model refers to the analytic model (\ref{quadrotor_model}) strengthened by proposed feedback mechanism. The adaptive neural network augmented model \citep{cheng2019human} uses the feedforward neural network to learn aerodynamic drag in which the last layer is regarded as a weighted vector, being adjusted adaptively according to real-time state feedback. Similar idea is also proposed in \citet{o2022neural, richards2022control, saviolo2022active}. In tests, all learning-based methods have the same hidden layers, and the parameters of the AdapNN-MPC are adjusted for optimal performance.}
	
	\textcolor{black}{The training loss on the training set, the validation set, and the test set of MLP augmented model is provided in Figure \ref{MLP_training}.}
	
	\subsubsection{Test results}
	
	A periodic $3D$ \textit{Lissajous} trajectory is used for comparative tests, where a variety of attitude-velocity combination is exploited. The position trajectory can be written as $\bm{p}(t) = [r_x sin(2\pi t/T_x),\; r_y sin(2\pi t/T_y),\; h + r_z cos(2\pi t/T_z)]$, where the parameters are $[r_x, r_y, r_z, T_x, T_y, T_z, h] = [3.0, 3.0, 0.5, 6.0, 3.0, 3.0, 0.5]$. Tracking such trajectory requires a conversion of the flat outputs to the nominal $12$-dimensional state $\bm{x}$ of the quadrotor using differential flatness-based mapping.
	
	During trajectory tracking, it could be seen from Figure \ref{dx_pred1} that the prediction accuracy of latent dynamics at the first step is improved significantly under the multi-step prediction. Although the learning-based model provides more solid results on dynamics prediction than just using the nominal model, with the help of feedback, a convergence property of prediction error can be achieved, leading to a better tracking performance (Figure \ref{sim_mpc}).
	
	\subsection{\textcolor{black}{Ablation study}}
	\textcolor{black}{Section \ref{gain_sec} has analyzed the sensitivity of observer gain at different levels of uncertainties. In this part, we further conduct the ablation studies on linear and nonlinear neural feedback units, and decay rate.}
	
	\subsubsection{\textcolor{black}{Linear feedback unit}}
	\textcolor{black}{We test the performance of correcting the latent dynamics of spiral curves at $12$ different levels of learning residuals, with or without linear feedback unit. The parameter uncertainties cover $\Delta \omega= \left\{-0.72: +0.12: 0.6\right\}$, $\Delta \eta= \left\{-0.03: +0.005: 0.025\right\}$, $\Delta {\varepsilon}= \left\{-14.4: +2.4: 12\right\}$. All compared results are summarized in Figure \ref{Feedback_linear_test}, which indicates the effectiveness of the linear feedback unit.}
	
	\subsubsection{\textcolor{black}{Neural feedback unit}}
	\textcolor{black}{Similar to the last test, we further test the performance of the neural feedback unit at $12$ different levels of learning residuals. All compared results are summarized in Figure \ref{Feedback_neuro_test}. It can be found that the developed feedback neural network shows better generalization performance by enabling the neural feedback unit.}
	
	\textcolor{black}{It can be seen from Figure \ref{Feedback_linear_test} and Figure \ref{Feedback_neuro_test}, both methods can achieve comparative learning performance. Compared with the linear feedback, no prior gain tunning is required for the neural feedback at the cost of training cost.}
	
	\subsubsection{\textcolor{black}{Decay rate}}
	\textcolor{black}{Ablation study on decay rate: The performance of the decay rate is examined in the spiral curve example. In tests, the decay rate is set as  $\beta= \left\{0: +0.01: 0.06\right\}$ in sequence, and the multi-step prediction errors (Figure \ref{Spiral_curve}(g)) are calculated in RMSE. The test results are shown in the Figure \ref{ablation_decay_rate}. It can be seen that the prediction error decreases with the increase of $\beta$ at the beginning due to noise mitigation. However, as $\beta$ continues to increase, the convergence time becomes slower, leading to a gradual increase in prediction error.}
	
	\textcolor{black}{In practice, the tunning of feedback gain and decay rate is very intuitive. They can be increased slowly from a small value until the critical value with the best estimation performance is reached.}

	%that the linear feedback slightly reduces the learning accuracy when there is no uncertainty
	
	\subsection{\textcolor{black}{Training cost}}
	
	\textcolor{black}{For the training of neural ODEs, two strategies are employed in this work: the adjoint sensitive method developed in \citet{chen2018neural} without considering external inputs, and the alternative training method developed in Appendix \ref{train_input} concerning external inputs. The adjoint sensitive method is utilized in the spiral curve and irregular object examples, and its computational resource and training time are the same as \citet{chen2018neural}. The alternative training method concerning external inputs is employed in the quadrotor example to learn residual dynamics. It takes around 30 mins to run 50 epochs on a laptop with 13th Gen Intel(R) Core(TM) i9-13900H. The alternative training method is derived from the view of optimal control, and its computational resource and training time are comparable to the adjoint sensitive method theoretically.}
	
	\textcolor{black}{Two feedback forms are presented. No prior training is required for the linear feedback form, showing its advantage over traditional learning-based generalization methods. Moreover, the linear feedback consists of several analytic equations, consuming almost no computing resources. As for the neural feedback form, due to the optimization problem being non-convex, a satisfactory result usually takes 10 mins to 1 hour of training time on a laptop with Intel(R) Core(TM) Ultra 9 185H 2.30 GHz.}

	\subsection{\textcolor{black}{Combination with adaptive neural ODE}} \label{sec_adaptFNN}
	
	\textcolor{black}{In this part, we further explore the combination potential of the developed linear feedback neural network with the test-time adaptation \citep{cheng2019human, o2022neural, richards2022control, saviolo2022active}. Let $\bm{f}_{neural}({\bm{x}(t),\bm{I}(t), t,\bm{\theta}}) = \bm{\Xi}({\bm{x}(t),\bm{I}(t), t,\bm{\theta}})\bm{\chi}$, where $\bm{\Xi}\left( \cdot \right): \mathbb{R}^n \times \mathbb{R}^m \times\ \mathbb{R} \rightarrow \mathbb{R}^n \times \mathbb{R}^l$ represents front layers of neural network, and $\bm{\chi} \in \mathbb{R}^l$ denotes the weighted vector of the last layer of neural network, which is constant in a test case and can be adjusted adaptively according to real-time state feedback. Integrating with the adaptive scheme, (\ref{feedback_NN}) can be adjust to}
	\begin{align} \label{feedback_NN2}
		\textcolor{black}{\bm{\hat{f}}_{neural}(t) = \bm{\Xi}(t)\bm{\hat{\chi}} + \bm{L}(\bm{x}(t)-\bm{\hat{x}}(t)),}
	\end{align}
	\textcolor{black}{where $\bm{\hat{\chi}}$ is updated through an adaptive law}
	\begin{align} \label{adaptive_law}
		\textcolor{black}{\bm{\dot{\hat{\chi}}} = \bm{\Gamma}\bm{\Xi}^T(t)\bm{\tilde{x}}(t)}
	\end{align}
	\textcolor{black}{with a positive definite observer gain $\bm{\Gamma} \in \mathbb{R}^l \times \mathbb{R}^l$.}
	
	\begin{thm} \label{thm2}
		\textcolor{black}{Consider the nonlinear system (\ref{ODE}). Under the linear state feedback (\ref{feedback_NN2}), the adaptive law (\ref{adaptive_law}), and the bounded Assumption \ref{assum}, the state observation error ${\bm{\tilde x}}(t)$ and its derivative ${\bm{\dot{\tilde x}}}(t)$ (i.e., ${\bm{\tilde f}}(t)$) can exponentially converge to bounded sets $\mathcal{B}_1 = \left\{{\bm{\tilde x}}(t)\in\mathbb{R}^n:\left\|{\bm{\tilde x}}(t)\right\|\le{\gamma}/{{\lambda_m(\bm{L})}}\right\}$ and $\mathcal{B}_2 = \left\{{\bm{\dot{\tilde x}}}(t)\in\mathbb{R}^n:\left\|{\bm{\dot{\tilde x}}}(t)\right\|\le{\gamma}\lambda_M(\bm{L})/{{\lambda_m(\bm{L})}} + \gamma\right\}$, respectively, which can be regulated by $\bm{L}$.}
	\end{thm}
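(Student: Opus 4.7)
\textbf{Proof proposal for Theorem \ref{thm2}.} The plan is to mimic the \textit{Lyapunov} argument used for Theorem \ref{thm1}, but with an augmented \textit{Lyapunov} function that accounts for the adaptive weight estimation error so that the extra cross-term introduced by $\bm{\Xi}(t)\bm{\hat{\chi}}$ is annihilated by the adaptive law (\ref{adaptive_law}). First, I would write the true dynamics as $\bm{f}(t) = \bm{\Xi}(t)\bm{\chi} + \Delta\bm{f}(t)$ with $\|\Delta\bm{f}(t)\|\le \gamma$ by Assumption \ref{assum}, where $\bm{\chi}$ denotes the (unknown) ideal last-layer weight. Define the weight estimation error $\bm{\tilde{\chi}}(t) = \bm{\chi} - \bm{\hat{\chi}}(t)$ and the state observation error $\bm{\tilde{x}}(t) = \bm{x}(t) - \bm{\hat{x}}(t)$ as before.

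Second, I would derive the error dynamics by differencing (\ref{ODE}) with (\ref{feedback_NN2}) (and using $\bm{\dot{\hat{x}}}(t) = \bm{\hat{f}}_{neural}(t)$ from the continuated \textit{Euler} integration argument of Section \ref{linear_feedback_sec}), which yields
\begin{align}
\bm{\dot{\tilde{x}}}(t) = -\bm{L}\bm{\tilde{x}}(t) + \bm{\Xi}(t)\bm{\tilde{\chi}}(t) + \Delta\bm{f}(t),\qquad \bm{\dot{\tilde{\chi}}}(t) = -\bm{\Gamma}\bm{\Xi}^T(t)\bm{\tilde{x}}(t).
\end{align}

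Third, I would propose the composite \textit{Lyapunov} function $V(t) = \tfrac{1}{2}\bm{\tilde{x}}^T\bm{\tilde{x}} + \tfrac{1}{2}\bm{\tilde{\chi}}^T\bm{\Gamma}^{-1}\bm{\tilde{\chi}}$. Differentiating along trajectories and substituting the two error equations, the cross-terms $\bm{\tilde{x}}^T\bm{\Xi}(t)\bm{\tilde{\chi}}$ and $-\bm{\tilde{\chi}}^T\bm{\Gamma}^{-1}\bm{\Gamma}\bm{\Xi}^T\bm{\tilde{x}}$ cancel identically thanks to the choice of the adaptive gain $\bm{\Gamma}$. What remains is exactly the inequality obtained in Theorem \ref{thm1}, namely $\dot{V}(t) \le -\bm{\tilde{x}}^T\bm{L}\bm{\tilde{x}} + \bm{\tilde{x}}^T\Delta\bm{f}(t)$. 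Applying the same \textit{Young}'s inequality step as in Appendix \ref{proof_thm1} then yields $\dot{V}(t) \le -\tfrac{\lambda_m(\bm{L})}{2}\|\bm{\tilde{x}}\|^2 + \tfrac{\gamma^2}{2\lambda_m(\bm{L})}$.

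Finally, I would invoke boundedness of $\bm{\tilde{\chi}}$ (guaranteed by $V$ being non-increasing whenever $\|\bm{\tilde{x}}\|$ exceeds the target bound, together with the positive definiteness of the $\bm{\Gamma}^{-1}$ weighting) to conclude that $\bm{\tilde{x}}(t)$ exponentially enters $\mathcal{B}_1$ with the same rate as in Theorem \ref{thm1}; the bound on $\bm{\dot{\tilde{x}}}(t) = \bm{\tilde{f}}(t)$ and hence membership in $\mathcal{B}_2$ follows by taking norms in the $\bm{\tilde{x}}$-error dynamics above, once the $\bm{\Xi}\bm{\tilde{\chi}}$ term is absorbed into the residual along with $\Delta\bm{f}$. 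The main obstacle I expect is the exponential (as opposed to merely ultimate) convergence claim: since $\dot{V}$ is only negative in the $\bm{\tilde{x}}$ block, strict exponential decay of $V$ itself is not immediate without a persistence-of-excitation type hypothesis on $\bm{\Xi}(t)$, and I would therefore have to argue that the same practical-convergence reasoning used in Appendix \ref{proof_thm1} (bounding $\|\bm{\tilde{x}}\|^2 \ge 2V - \bm{\tilde{\chi}}^T\bm{\Gamma}^{-1}\bm{\tilde{\chi}}$ and using boundedness of $\bm{\tilde{\chi}}$) suffices to recover the stated exponential rate on the $\bm{\tilde{x}}$-component alone, which matches the wording of the theorem.
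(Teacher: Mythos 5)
Your proposal follows exactly the paper's route: the same composite \textit{Lyapunov} function $V=\tfrac12\bm{\tilde x}^T\bm{\tilde x}+\tfrac12\bm{\tilde\chi}^T\bm{\Gamma}^{-1}\bm{\tilde\chi}$, the same cancellation of the cross terms via the adaptive law, and the same reduction to $\dot V\le-\bm{\tilde x}^T\bm{L}\bm{\tilde x}+\bm{\tilde x}^T\Delta\bm{f}(t)$, after which the paper simply states that ``the following proof process is consistent with'' the Theorem~\ref{thm1} derivation and omits it. The obstacle you flag in your final paragraph is therefore not a defect of your write-up relative to the paper --- it is a genuine gap shared by the paper's own proof. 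In Theorem~\ref{thm1} the step $\dot V\le-\lambda_m(\bm{L})V+\gamma^2/(2\lambda_m(\bm{L}))$ works because $V=\tfrac12\|\bm{\tilde x}\|^2$ exactly; here $V\ge\tfrac12\|\bm{\tilde x}\|^2$, so $-\tfrac{\lambda_m(\bm{L})}{2}\|\bm{\tilde x}\|^2\ge-\lambda_m(\bm{L})V$ and the inequality points the wrong way: one cannot conclude exponential decay of $V$, only that $V$ decreases whenever $\|\bm{\tilde x}\|>\gamma/\lambda_m(\bm{L})$, which (as is standard in adaptive control without persistent excitation on $\bm{\Xi}(t)$) yields boundedness and convergence of $\bm{\tilde x}$ to the residual set in an ultimate or mean-square sense rather than at a guaranteed exponential rate, and your proposed patch via boundedness of $\bm{\tilde\chi}$ inflates the ultimate bound by a $\bm{\tilde\chi}$-dependent term rather than recovering $\mathcal{B}_1$ exactly. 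So: same approach as the paper, correctly executed up to the point where the paper itself stops, and your skepticism about the literal exponential-convergence claim is well founded.
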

	\begin{proof}
		\textcolor{black}{Define the estimation error ${\bm{\tilde \chi}} = {\bm{\chi}} - {\bm{\hat \chi}}$ and a \textit{Lyapunov} function}
		\begin{align} \label{lyapunov2}
			\textcolor{black}{V(t) = \frac{1}{2}{\bm{{\tilde {x}}}}(t)^T{\bm{{\tilde {x}}}}(t) + \frac{1}{2}{\bm{\tilde \chi}}^T \bm{\Gamma}^{-1}{\bm{\tilde \chi}}.}
		\end{align}
		
		\textcolor{black}{Differentiate $V(t)\in \mathbb{R}$, yielding}
		\begin{align} \label{lyapunov_devi2}
			\dot V\left( t \right) & = \textcolor{black}{\bm{{\tilde {x}}}{\left( t \right)^T}(\bm{\dot{{x}}}\left( t \right)-\bm{\dot{\hat {x}}}\left( t \right)) - {\bm{\tilde \chi}}^T \bm{\Gamma}^{-1} {\bm{\dot{\hat \chi}}}}  \notag \\
			&\mathop  = \limits^{(e)} \textcolor{black}{-\bm{{\tilde {x}}}{\left( t \right)^T}\bm{L}{\bm{{\tilde {x}}}}(t)+ \bm{{\tilde {x}}}{\left( t \right)^T}(\bm{\Xi}(t)\bm{\tilde{\chi}} + \Delta \bm{f}(t)) - {\bm{\tilde \chi}}^T \bm{\Xi}^T(t)\bm{\tilde{x}}(t)} \notag \\
			& = \textcolor{black}{-\bm{{\tilde {x}}}{\left( t \right)^T}\bm{L}{\bm{{\tilde {x}}}}(t) + \bm{{\tilde {x}}}{\left( t \right)^T}\Delta \bm{f}(t) \notag} \\
		\end{align}
		\textcolor{black}{where $(e)$ is driven by substituting (\ref{error_dyna}), (\ref{feedback_NN2}), and (\ref{adaptive_law}). The following proof process is consistent with (\ref{lyapunov_devi}), which is omitted here.}
		
		\textcolor{black}{Compared (\ref{feedback_NN}), (\ref{feedback_NN2}) further increases the flexibility of the neural network by inducing the adaptive mechanism. We further test this scheme in the quadrotor example, as shown in Figure \ref{adapFNN}. Note that the feedback gain is set the same as that of the previous feedback neural network. It can seen that the adaptation-enhanced feedback neural network (abbreviated as AdapFNN) achieves performance comparable to the previous feedback neural network, with a slightly larger RMSE.}
		
		\textcolor{black}{We think that the possible reason why the AdapFNN does not bring significant performance improvement is that the last layer of the neural network is not trained analytically. In other words, the uncertainty of the test scenario is not reflected in the last layer. The bilevel training strategy \citep{o2022neural, richards2022control} may help improve AdapFNN's performance.}
	\end{proof}

	\renewcommand*{\thefigure}{S2}
	\begin{figure}
		\centering
		\includegraphics[width=1\linewidth,trim=50 50 50 50,clip]{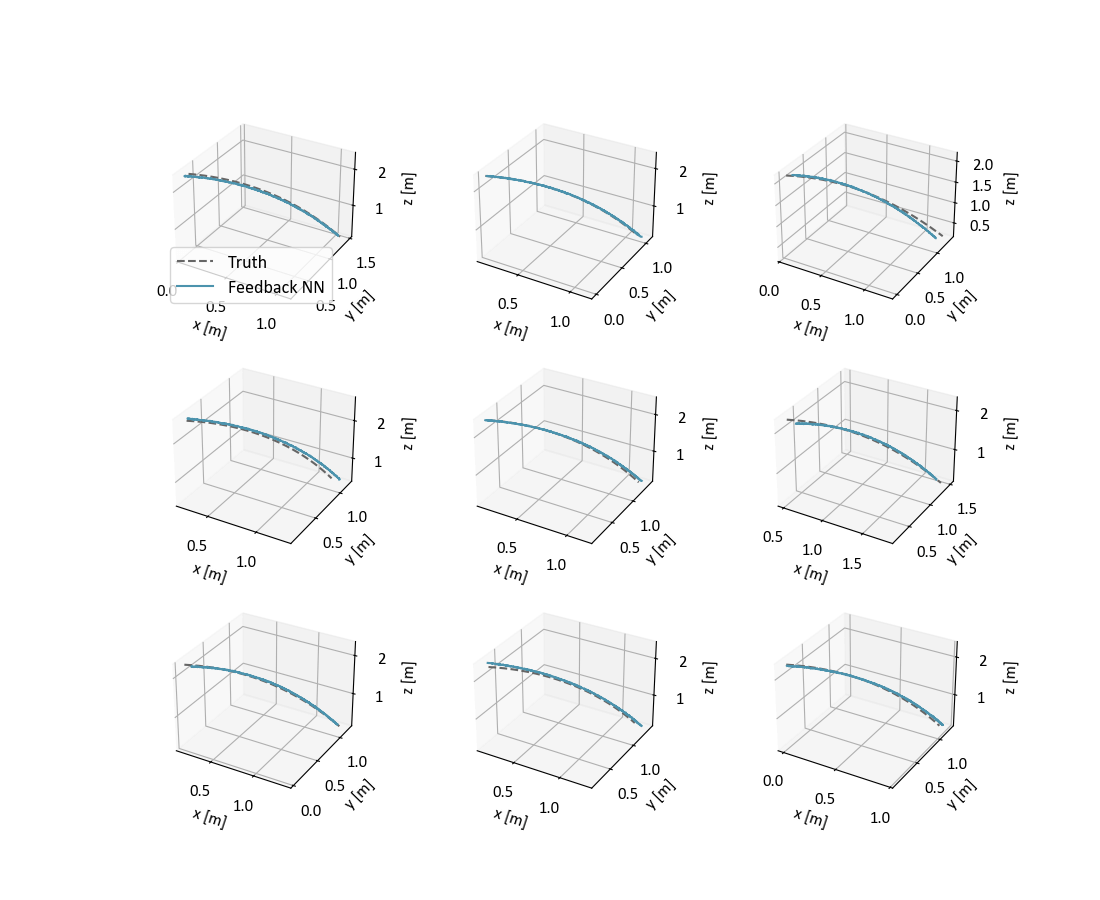}
		\caption{Trajectory prediction results of all $9$ test trajectories in Section \ref{trj_pre_exam}. It can be seen that the predicted trajectories almost overlap with the truth ones.}
		\label{exp1_pos}
	\end{figure}
	
	\renewcommand*{\thefigure}{S3}
	\begin{figure}
		\centering
		\includegraphics[width=0.8\linewidth,trim=0 0 0 0,clip]{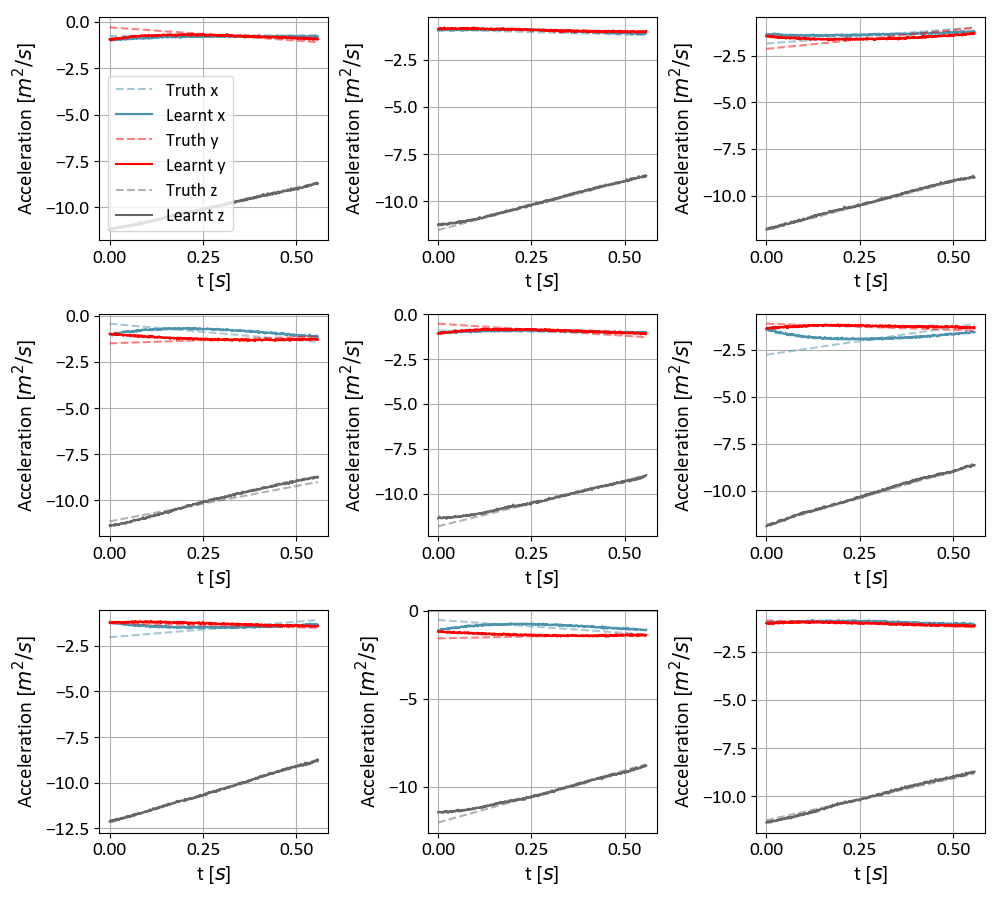}
		\caption{The learning performance of latent accelerations of all $9$ test trajectories in Section \ref{trj_pre_exam}. It can be seen that the feedback neural network can accurately capture the latent dynamics of test trajectories out of the training set.}
		\label{exp1_acc}
	\end{figure}
	
	\renewcommand*{\thefigure}{S4}
	\begin{figure}
		\centering
		\includegraphics[width=0.8\linewidth]{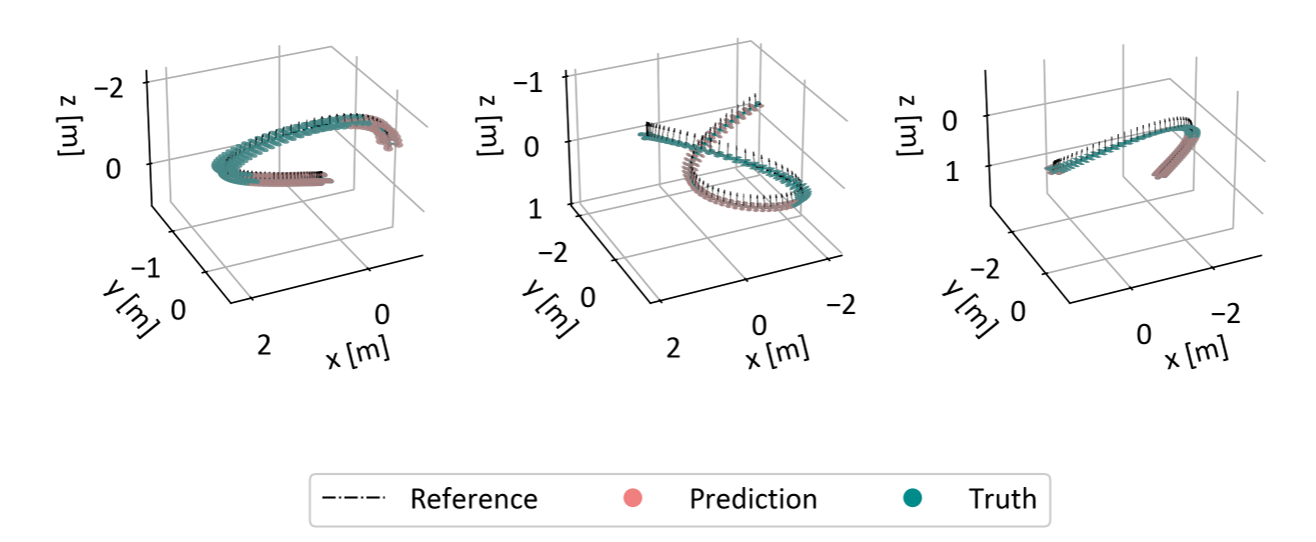}
		\caption{$3$ random trajectories generated for validations of the learned neural ODE, named traj-\#$1$, traj-\#$2$, and traj-\#$3$. All trajectories show well-predicted motions on pose and attitude. Detailed results on all $12$ states are provided in Figures \ref{traj_val_states1}-\ref{traj_val_states3}.}
		\label{traj_val_poseatt}
	\end{figure}
	
	\renewcommand*{\thefigure}{S5}
	\begin{figure}
		\centering
		\includegraphics[width=1\linewidth]{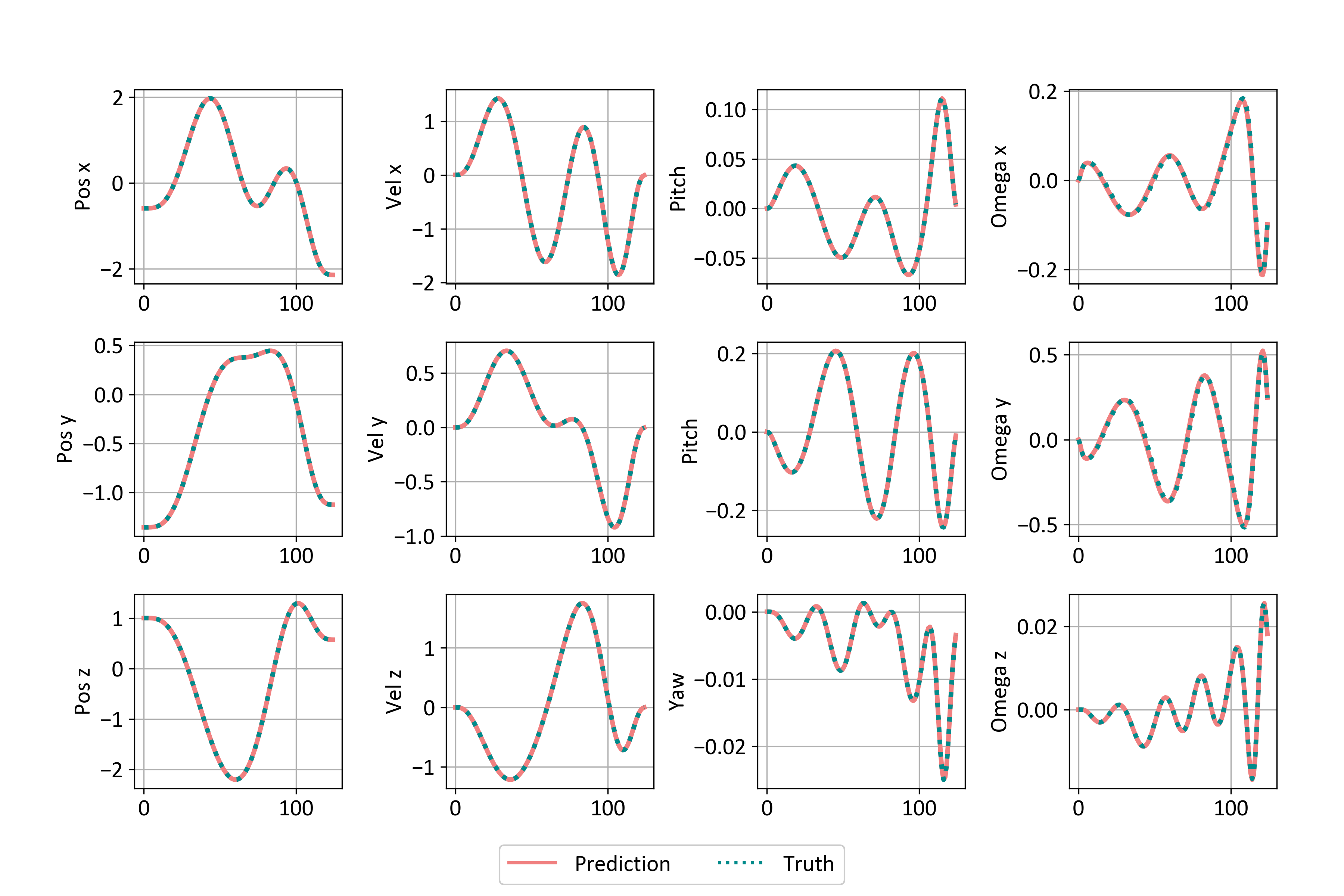}
		\caption{Validation of learned neural ODE. Prediction on all $12$ states of traj-\#$1$.}
		\label{traj_val_states1}
	\end{figure}
	
	\renewcommand*{\thefigure}{S6}
	\begin{figure}
		\centering
		\includegraphics[width=1\linewidth]{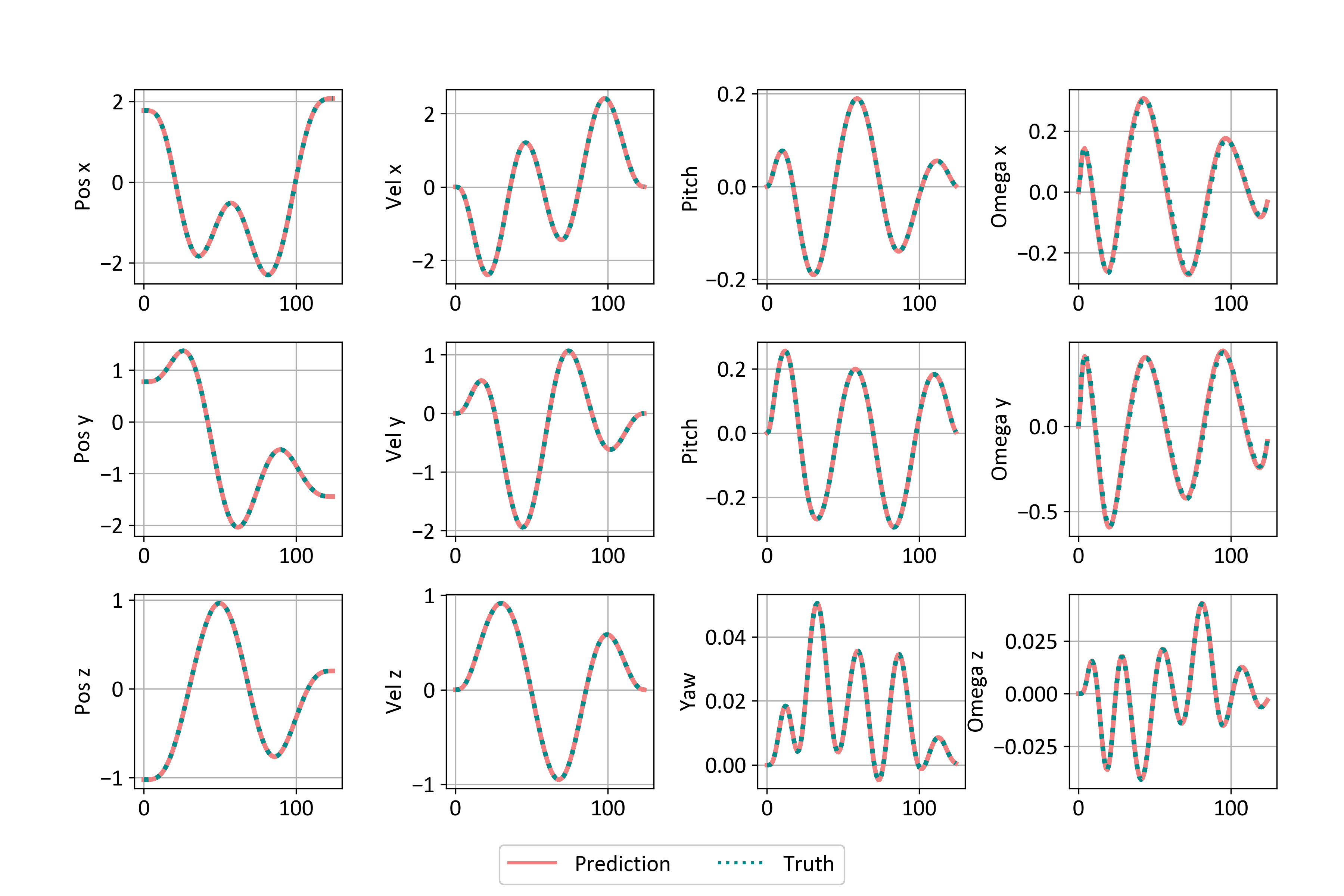}
		\caption{Validation of learned neural ODE. Prediction on all $12$ states of traj-\#$2$.}
		\label{traj_val_states2}
	\end{figure}
	
	\renewcommand*{\thefigure}{S7}
	\begin{figure}
		\centering
		\includegraphics[width=1\linewidth]{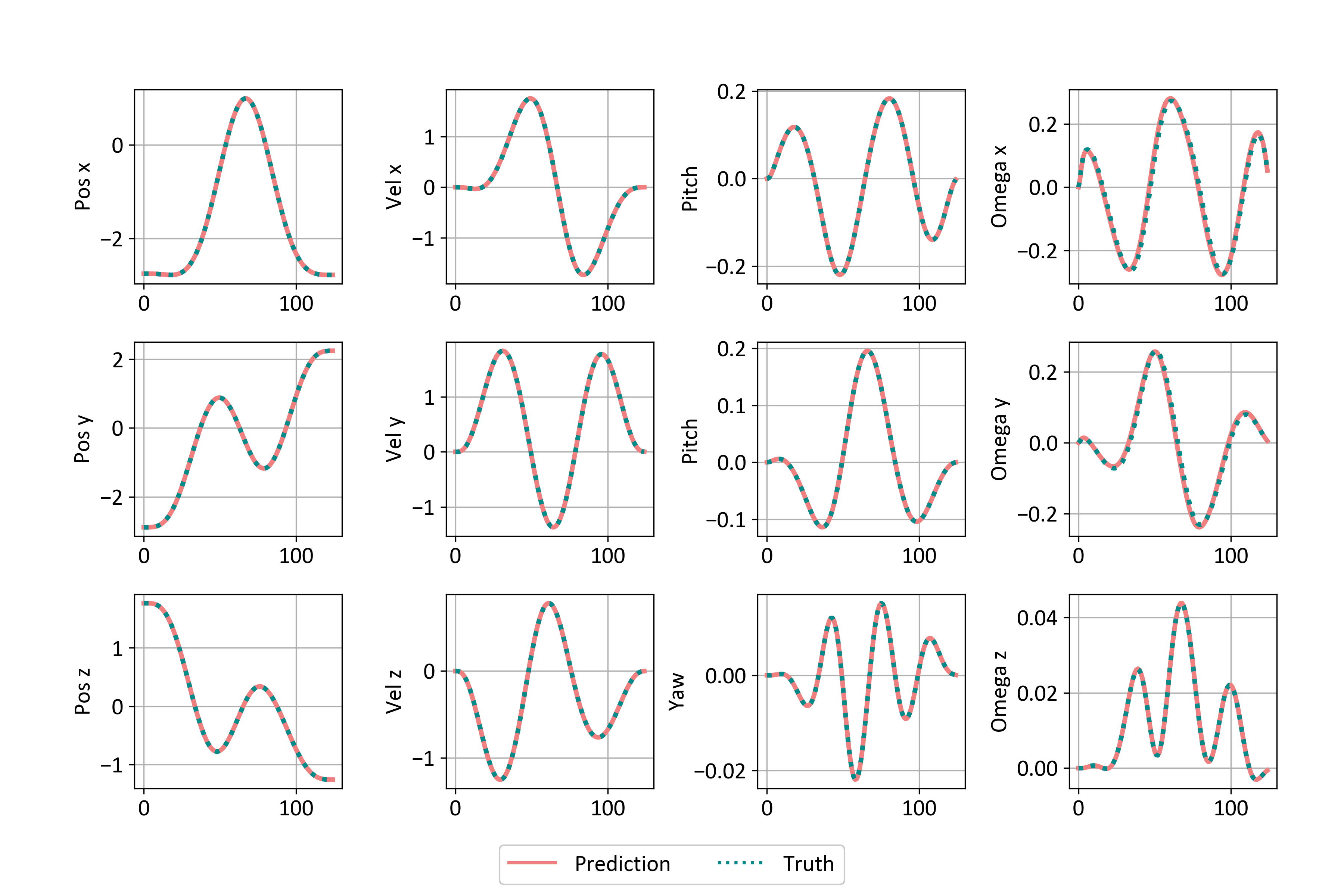}
		\caption{Validation of learned neural ODE. Prediction on all $12$ states of traj-\#$3$.}
		\label{traj_val_states3}
	\end{figure}
	
	\renewcommand*{\thefigure}{S8}
	\begin{figure}
		\centering
		\includegraphics[width=0.65\linewidth]{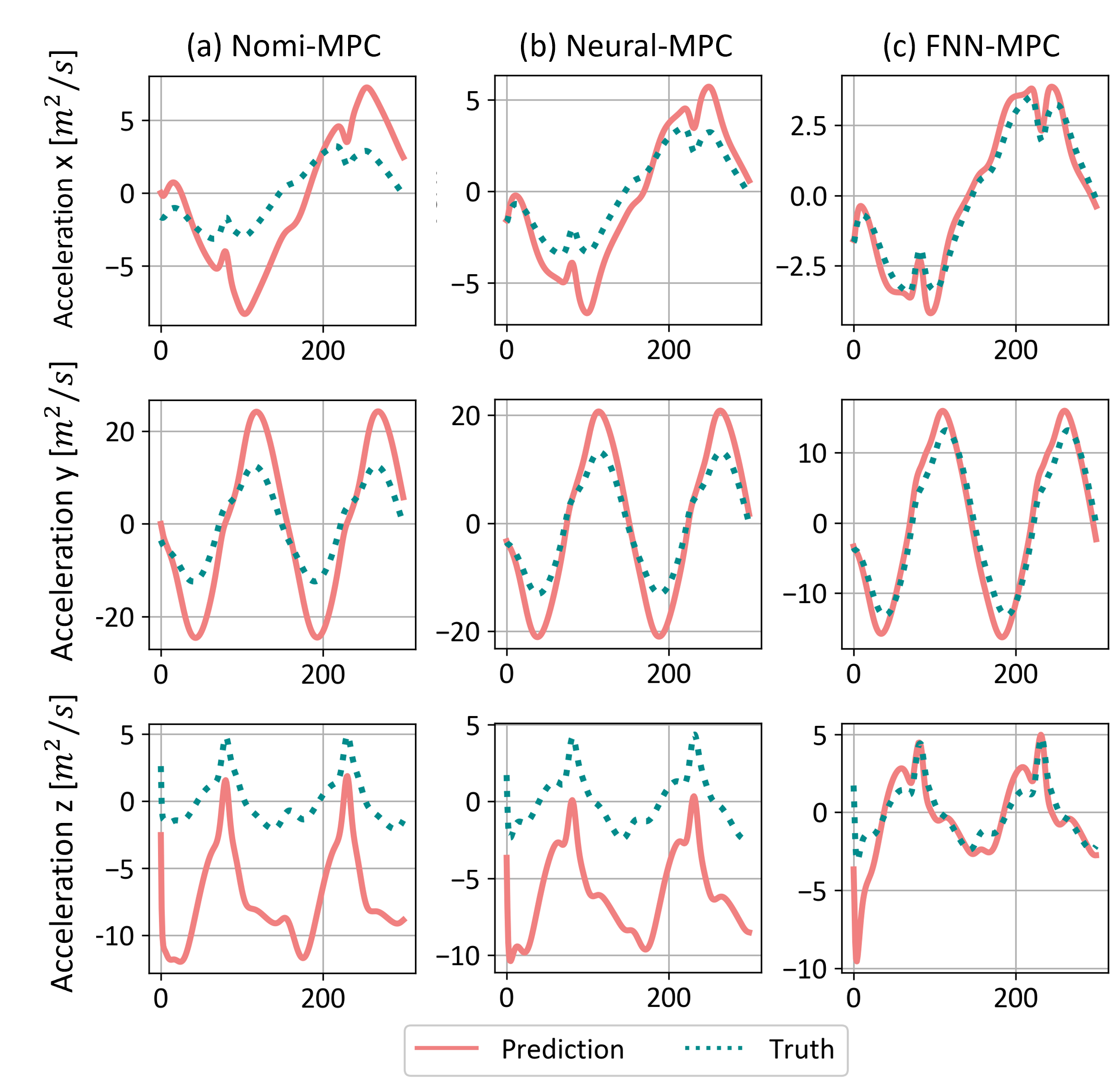}
		\caption{Test on the \textit{Lissajous} trajectory. Prediction on the translational latent dynamics (i.e., acceleration) at the first step using different prediction models. The feedback neural network augmented model achieve the best prediction performance.}
		\label{dx_pred1}
	\end{figure}
	
	\renewcommand*{\thefigure}{S9}
	\begin{figure}
		\centering
		\includegraphics[width=1\linewidth,trim=0 0 0 0,clip]{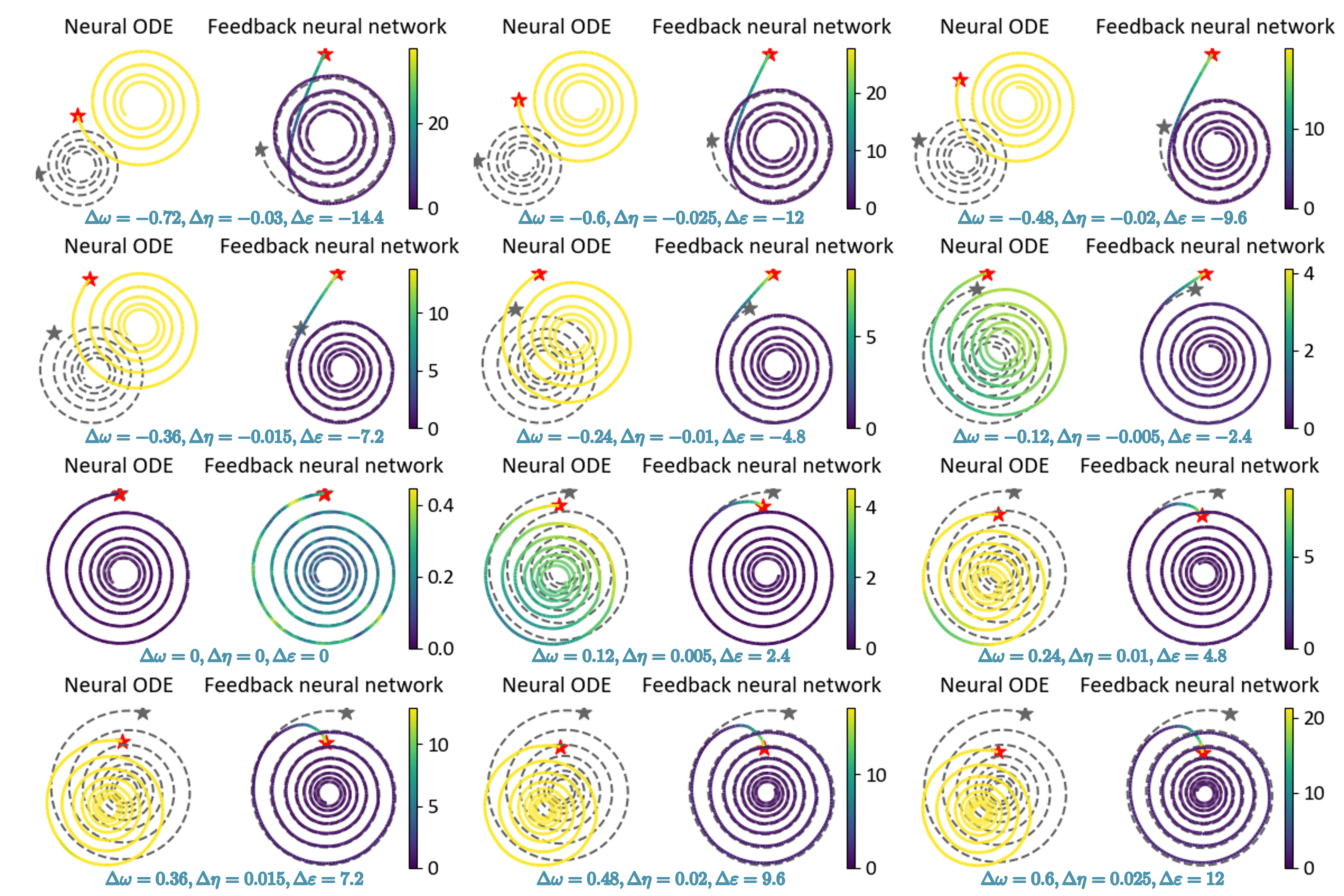}
		\caption{\textcolor{black}{Test the linear feedback form on $12$ randomized cases in the spiral curve example. It can be found that the developed feedback neural network shows better generalization performance as enabling the linear feedback unit.}}
		\label{Feedback_linear_test}
	\end{figure}
	
	\renewcommand*{\thefigure}{S10}
	\begin{figure}
		\centering
		\includegraphics[width=1\linewidth,trim=0 0 0 0,clip]{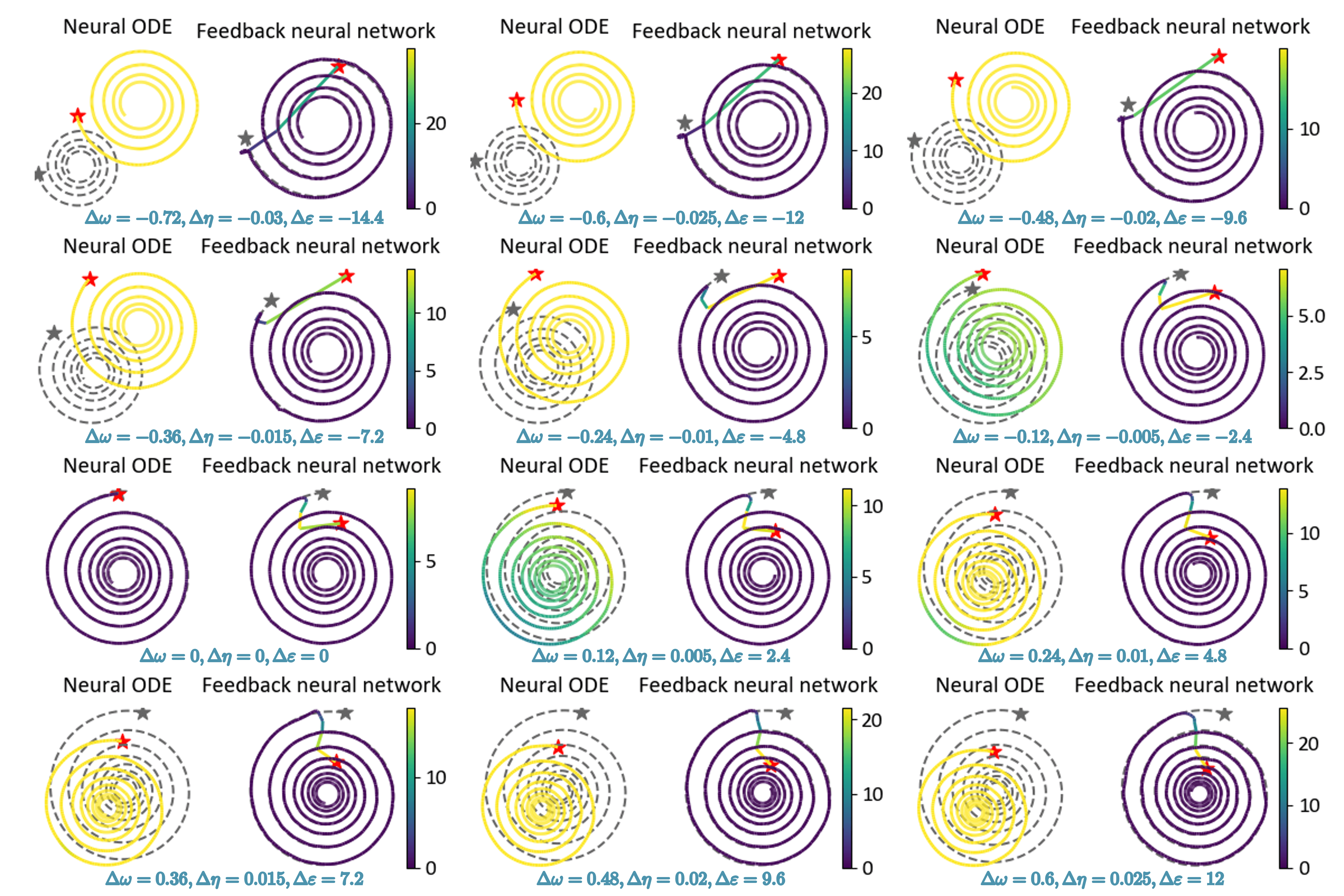}
		\caption{\textcolor{black}{Test the neural feedback form on $12$ randomized cases in the spiral curve example. The developed feedback neural network with the neural feedback unit shows better generalization performance than the neural ODE.}}
		\label{Feedback_neuro_test}
	\end{figure}
	
	\renewcommand*{\thefigure}{S11}
	\begin{figure}
		\centering
		\includegraphics[width=0.5\linewidth,trim=0 0 0 0,clip]{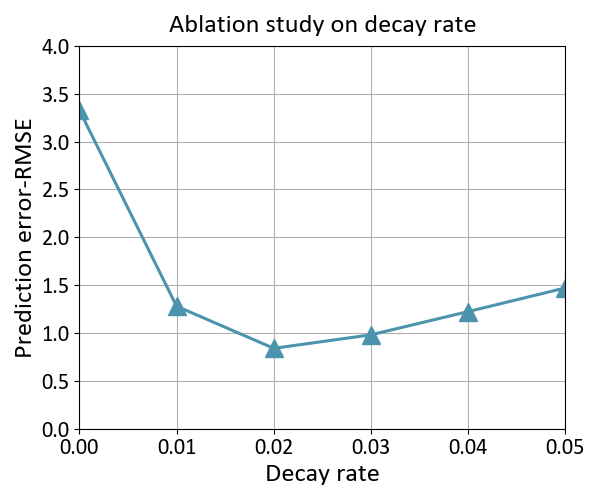}
		\caption{\textcolor{black}{Ablation study on decay rate.}}
		\label{ablation_decay_rate}
	\end{figure}
	
	\renewcommand*{\thefigure}{S12}
	\begin{figure}
		\centering
		\includegraphics[width=0.8\linewidth,trim=0 0 0 0,clip]{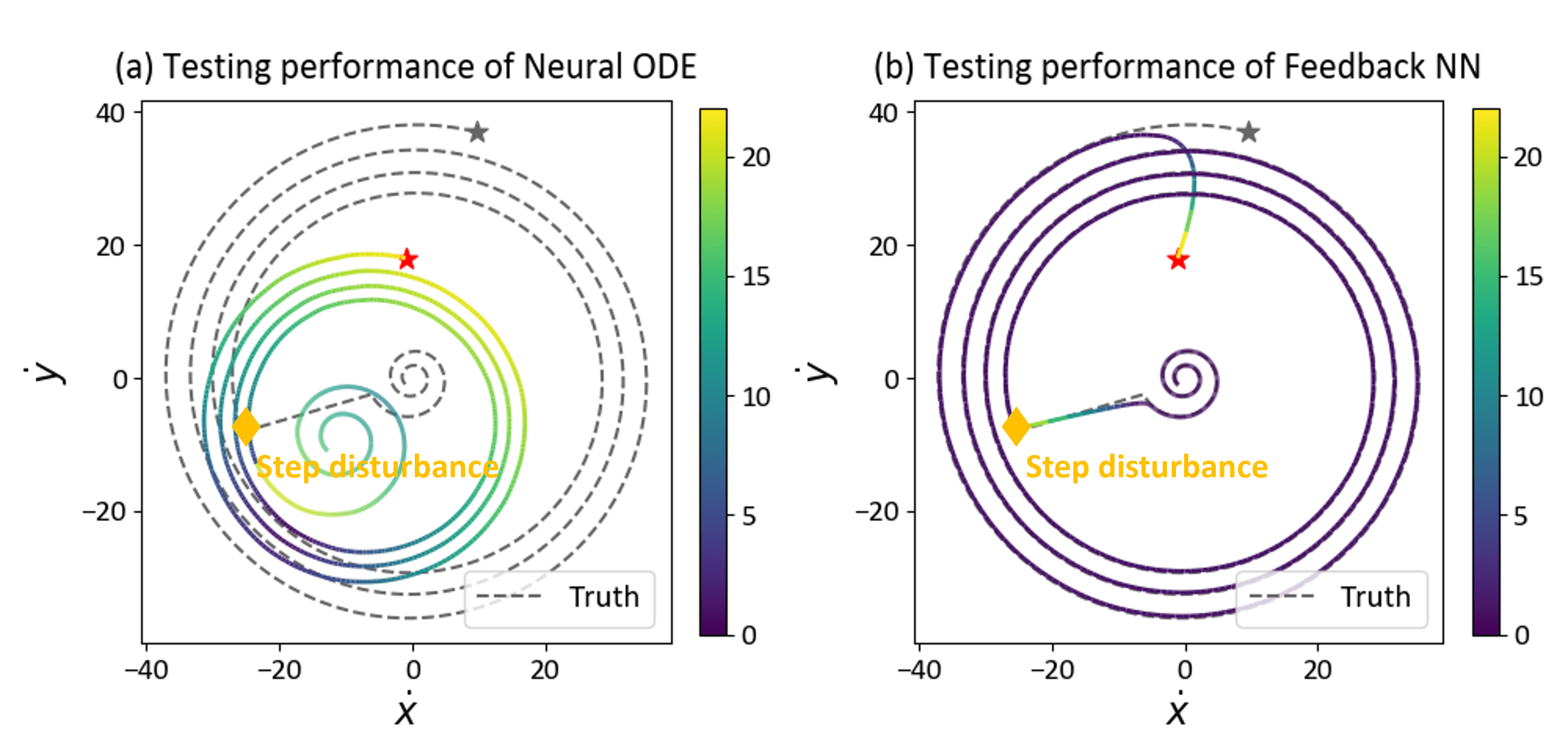}
		\caption{\textcolor{black}{Test on step disturbance in the spiral curve example. The test setup is the same as Figure \ref{Spiral_curve} except for the induce of step disturbance. As $t = 7\ s$ (denoted by a yellow diamond symbol), the latent dynamics is changed suddenly ($\omega: 3 \rightarrow 1$, $\eta: -0.05 \rightarrow -0.12$, ${\varepsilon}: 10 \rightarrow 5$). It can be seen that the proposed feedback neural network can attenuate the step disturbance quickly.}}
		\label{step_disturbance}
	\end{figure}

	\renewcommand*{\thefigure}{S13}
	\begin{figure}
		\centering
		\includegraphics[width=0.5\linewidth,trim=0 0 0 0,clip]{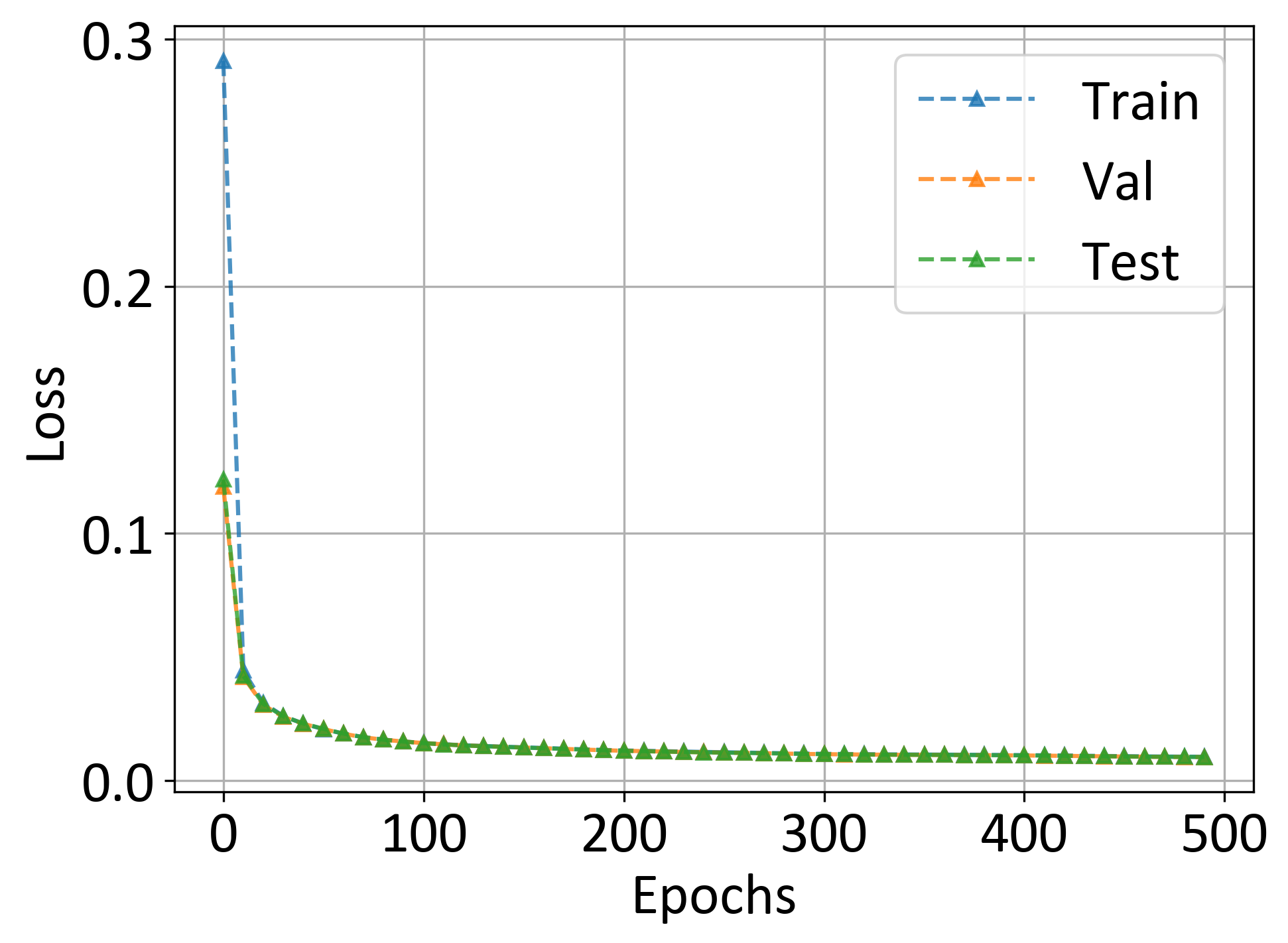}
		\caption{\textcolor{black}{The training loss on the training set, the validation set, and the test set of the compared MLP augmented model.}}
		\label{MLP_training}
	\end{figure}
	
	\renewcommand*{\thefigure}{S14}
	\begin{figure}
		\centering
		\includegraphics[width=0.8\linewidth,trim=0 0 0 0,clip]{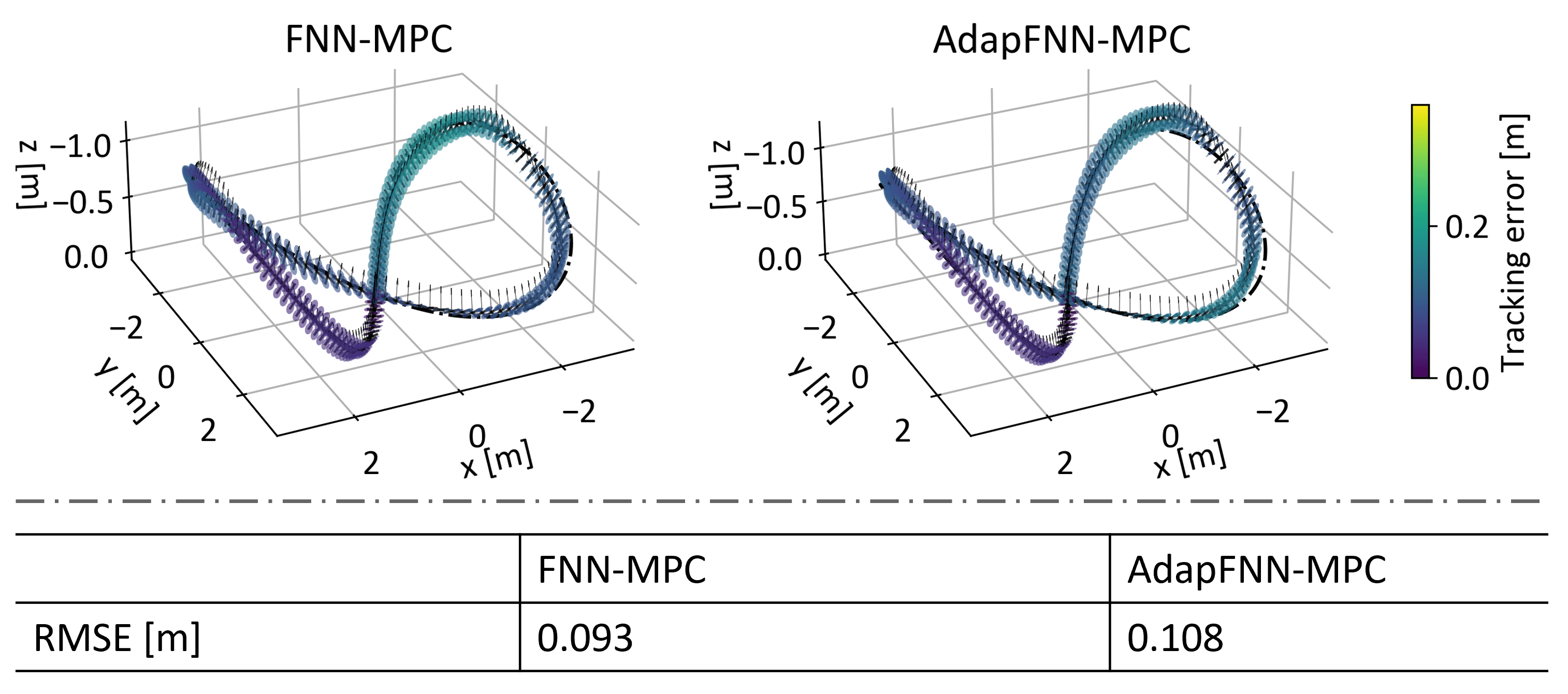}
		\caption{\textcolor{black}{Tracking the \textit{Lissajous} trajectory using MPC with adaptation-enhanced feedback neural network.}}
		\label{adapFNN}
	\end{figure}
	
	%\renewcommand*{\thefigure}{S9}
	%\begin{figure}
	%	\centering
	%	\includegraphics[width=1\linewidth]{}
	%	\caption{Test on the \textit{Lissajous} trajectory. Prediction on the derivatives all $12$ states at the first step using neural ODE augmented model.}
	%	\label{dx_pred2}
	%\end{figure}
	%
	%\renewcommand*{\thefigure}{S10}
	%\begin{figure}
	%	\centering
	%	\includegraphics[width=1\linewidth]{}
	%	\caption{Test on the \textit{Lissajous} trajectory. Prediction on the derivatives all $12$ states at the first step using feedback neural ODE augmented model.}
	%	\label{dx_pred3}
	%\end{figure}

\end{document}